\documentclass[twocolumn]{article}
\usepackage{arxiv}

\usepackage[sort&compress, numbers]{natbib}
\usepackage{soul}

\usepackage{microtype}
\usepackage{graphicx}
\usepackage{subfigure}
\usepackage{booktabs} %

\usepackage{hyperref}

\usepackage{algorithm}
\usepackage{algorithmic}
\usepackage{amsmath}
\usepackage{amssymb}
\usepackage{mathtools}
\usepackage{amsthm}
\usepackage{bm}

\usepackage{nicefrac} 
\usepackage{enumitem}
\usepackage{color}
\usepackage{multirow}
\usepackage{nccmath, amssymb}
\usepackage{bbold}
\usepackage{thmtools,thm-restate}
\usepackage{dblfloatfix}

\newtheorem{theorem}{Theorem}
\newtheorem{lemma}{Lemma}
\newtheorem{remark}{Remark}
\newtheorem{definition}{Definition}
\newtheorem{proposition}{Proposition}
\newtheorem{corollary}{Corollary}

\DeclareMathOperator*{\esssup}{\ensuremath{\text{\rm ess\,sup}}}

\DeclareMathOperator*{\argmin}{\ensuremath{\text{\rm arg\,min}}}

\DeclareMathOperator*{\range}{\ensuremath{\text{\rm Im}}}

\DeclareMathOperator*{\cl}{\ensuremath{\text{\rm cl}}}

\DeclareMathOperator{\Ker}{\ensuremath{\text{\rm Ker}}}

\DeclareMathOperator{\tr}{\ensuremath{\text{\rm tr}}}

\DeclareMathOperator*{\rank}{\ensuremath{\text{\rm rank}}}
\DeclareMathOperator*{\Span}{\ensuremath{\text{\rm span}}}

\DeclareMathOperator*{\Spec}{\ensuremath{\text{\rm Sp}}}

\newcommand{\X}{\mathcal{X}} %
\newcommand{\F}{\mathcal{F}} %
\newcommand{\RKHS}{\mathcal{H}} %

\newcommand{\cRKHSs}[1]{[\mathcal{H}]^c_{#1}} %

\newcommand{\Lii}{L^2_\im(\X)}%
\newcommand{\sigalg}{\Sigma_{\X}} %
\newcommand{\im}{\pi} %

\newcommand{\Risk}{\mathcal{R}} %
\newcommand{\ExRisk}{\mathcal{E}_{\rm HS}} %
\newcommand{\IrRisk}{\mathcal{R}_0} %
\newcommand{\error}{\mathcal{E}} %
\newcommand{\ERisk}{\widehat{\mathcal{R}}} %

\newcommand{\cRisk}{\mathcal{R}^{\circ}} %
\newcommand{\cExRisk}{\mathcal{E}_{\rm HS}^{\circ}} %
\newcommand{\cIrRisk}{\mathcal{R}_0^{\circ}} %
\newcommand{\cerror}{\mathcal{E}^{\circ}} %
\newcommand{\cERisk}{\widehat{\mathcal{R}}^{\circ}} %

\newcommand{\eimx}{\hat{\im}_x}
\newcommand{\eimy}{\hat{\im}_y}

\newcommand{\cntr}[1]{\textbf{\textsf{#1}}} %

\newcommand{\kme}[1]{k_{#1}} %
\newcommand{\fm}[1]{k_{#1}} %
\newcommand{\mt}[1]{\mu_{#1}} %
\newcommand{\emt}[1]{\widehat{\mu}_{#1}} %
\newcommand{\dt}[1]{q_{#1}} %

\newcommand{\srad}{\rho}

\newcommand{\mmd}[1]{\norm{#1}_{\RKHS^*}^2} %

\newcommand{\norm}[1]{\lVert#1\rVert}
\newcommand{\scalarp}[1]{{\langle #1\rangle}}
\providecommand{\SVDr}[1]{[\![#1]\!]_r}
\providecommand{\abs}[1]{\lvert#1\rvert}

\newcommand{\dnorm}[1]{\left\lVert#1\right\rVert_{\RKHS^*}}

\newcommand{\R}{\mathbb R}
\newcommand{\C}{\mathbb C}
\newcommand{\N}{\mathbb N}

\newcommand{\T}{\mathbb T}

\newcommand{\B}[1]{{\mathcal B}(#1)}

\newcommand{\EE}{\ensuremath{\mathbb E}}
\newcommand{\PP}{\ensuremath{\mathbb P}}

\newcommand{\Id}{I}
\newcommand{\Data}{\mathcal{D}}
\newcommand{\Koop}{{A}_{\im}}  %
\newcommand{\cKoop}{\cntr{A}_{\im}}  %
\newcommand{\adjKoop}{A^{*}_{\im}}  %
\newcommand{\adjcKoop}{\cntr{A}^{*}_{\im}}  %

\newcommand{\TrOp}{A}

\newcommand{\CME}{g_p}

\newcommand{\Estim}{G}  %
\newcommand{\cEstim}{\cntr{\Estim}}  %

\newcommand{\EEstim}{\widehat{\Estim}}  %
\newcommand{\cEEstim}{\widehat{\cEstim}}  %

\newcommand{\cHKoop}{\cEstim_{\RKHS}}  %

\newcommand{\RKoop}{\Estim_\reg}  %
\newcommand{\cRKoop}{\cEstim_\reg}  %

\newcommand{\ERKoop}{\widehat{\Estim}_\reg} %
\newcommand{\cERKoop}{\widehat{\cEstim}_\reg} %

\newcommand{\RRR}{\Estim^{\rm RRR}_{r,\reg}}  %
\newcommand{\cRRR}{\cEstim^{\rm RRR}_{r,\reg}}  %
\newcommand{\ERRR}{\EEstim^{\rm RRR}_{r,\reg}}  %
\newcommand{\cERRR}{\cEEstim^{\rm RRR}_{r,\reg}}  %

\newcommand{\EPCR}{\EEstim^{\rm PCR}_{r,\reg}}  %
\newcommand{\cEPCR}{\cEEstim^{\rm PCR}_{r,\reg}}  %

\newcommand{\Cx}{C} %
\newcommand{\cCx}{\cntr{\Cx}} %

\newcommand{\ECx}{\widehat{\Cx}} %
\newcommand{\cECx}{\widehat{\cCx}} %

\newcommand{\Cxy}{T}  %
\newcommand{\cCxy}{\cntr{\Cxy}}  %
\newcommand{\ECxy}{\widehat{\Cxy}}  %
\newcommand{\cECxy}{\widehat{\cCxy}}  %

\newcommand{\ECyx}{\ECyx^*}  %

\newcommand{\Creg}{\Cx_\reg}
\newcommand{\cCreg}{\cCx_\reg}
\newcommand{\ECreg}{\ECx_\reg}
\newcommand{\cECreg}{\cECx_\reg}

\newcommand{\Kx}{K_x} %
\newcommand{\cKx}{\cntr{K}_x}
\newcommand{\Kreg}{K_\reg} %
\newcommand{\cKreg}{\cntr{K}_\reg}
\newcommand{\Ky}{K_y} %
\newcommand{\cKy}{\cntr{K}_y}
\newcommand{\Kxy}{K_{xy}}

\newcommand{\Kxz}{K_{xz}}

\newcommand{\TS}{S_\im}  %
\newcommand{\cTS}{\cntr{S}_\im}  %

\newcommand{\cTSs}[1]{\cntr{S}_{#1}}  %

\newcommand{\TZ}{\Koop\TS}  %
\newcommand{\cTZ}{\cKoop\cTS}

\newcommand{\ES}{\widehat{S}_x} %
\newcommand{\cES}{\widehat{\cntr{S}}_x} %
\newcommand{\EZ}{\widehat{S}_y} %
\newcommand{\cEZ}{\widehat{\cntr{S}}_y} %

\newcommand{\TB}{B}  %
\newcommand{\EB}{\widehat{B}} %

\newcommand{\TP}{P}  %
\newcommand{\EP}{\widehat{P}} %

\newcommand{\HS}[1]{{\rm{HS}}\left(#1\right)} %
\newcommand{\hnorm}[1]{\norm{#1}_{\rm{HS}}}

\newcommand{\transitionkernel}{p} %
\newcommand{\reg}{\gamma}
\newcommand{\rate}{\varepsilon}

\newcommand{\rpar}{\alpha}
\newcommand{\spar}{\beta}
\newcommand{\epar}{\tau}
\newcommand{\rcon}{a}
\newcommand{\scon}{b}
\newcommand{\econ}{c_\epar}
\newcommand{\bcon}{c_\RKHS}

\newcommand{\pcon}{p}
\newcommand{\sumcon}{s}

\newcommand{\kreiss}{\eta}
\newcommand{\dtoins}{d}

\newcommand{\refun}{\psi}
\newcommand{\erefun}{\widehat{\psi}}
\newcommand{\lefun}{\xi}
\newcommand{\elefun}{\widehat{\xi}}

\newcommand{\eval}{\lambda}

\newcommand{\eeval}{\widehat{\eval}}

\newcommand{\one}{\mathbb 1}

\newcommand{\cf}{\one_\im}
\newcommand{\cv}[1]{\one_{#1}}
\newcommand{\cvn}{\cv{n}}

\newcommand{\V}{V}
\newcommand{\fH}{\phi}
\newcommand{\fG}{\psi}

\newcommand{\HSr}{{\mathcal B}_r(\RKHS)}

\newcommand{\J}{J_\im}
\newcommand{\Jn}{J_n}

\newcommand{\embedding}{k}

\newcommand{\MMD}{\text{MMD}}

\newcommand{\SM}{\mathcal{M}^{+}(\X)}

\newcommand{\TM}{\mu}
\newcommand{\EM}{\hat{\TM}}
\newcommand{\density}{\varphi}

\newcommand{\err}{\text{E}}
\newcommand{\DRRR}{DRRR}

\usepackage[textsize=tiny]{todonotes}
\newcommand{\prune}[2][]{\todo[color=orange!20,#1]{{\bf PI:} #2}}

\newcommand{\MP}[1]{\textcolor{brown}{#1}}

\newcommand{\karim}[2][]{\todo[color=blue!20,#1]{{\bf KL:} #2}}

\begin{document}
\date{}
\title{Consistent Long-Term Forecasting of Ergodic Dynamical Systems}
\author{Prune Inzerilli \thanks{} \\ \'Ecole Polytechnique\\
\texttt{prune.inzerilli@polytechnique.edu} \And Vladimir R. Kostic\\
Istituto Italiano di Tecnologia\\
University of Novi Sad\\
\texttt{vladimir.kostic@iit.it}
\And
Pietro Novelli\\
Istituto Italiano di Tecnologia\\
\texttt{pietro.novelli@iit.it}
\And
Karim Lounici\\
CMAP \'Ecole Polytechnique\\
\texttt{karim.lounici@polytechnique.edu}
\And
Massimiliano Pontil \\
Istituto Italiano di Tecnologia\\
University College London\\
\texttt{massimiliano.pontil@iit.it}}

\twocolumn[

\maketitle

\begin{abstract}
    We study the evolution of distributions under the action of an ergodic dynamical system, which may be stochastic in nature. By employing tools from Koopman and transfer operator theory one can evolve any initial distribution of the state forward in time, and we investigate how estimators of these operators perform on long-term forecasting. {Motivated by the observation that standard estimators may fail at this task,} we introduce a learning paradigm that neatly combines classical techniques of \textit{eigenvalue deflation} from operator theory and \textit{feature centering} from statistics. This paradigm applies to any operator estimator based on empirical risk minimization, making them satisfy learning bounds which hold \textit{uniformly} on the entire trajectory of future distributions, and abide to the \textit{conservation of mass} for each of the forecasted distributions. Numerical experiments illustrates the advantages of our approach in practice.
    \end{abstract}
\bigskip
\bigskip
]
{
  \renewcommand{\thefootnote}%
    {\fnsymbol{footnote}}
  \footnotetext[1]{Work done while at Istituto Italiano di Tecnologia.}
}

\section{INTRODUCTION
}

Dynamical systems offer a mathematical framework to describe the evolution of state variables over time. In many applications, these models, often represented by unknown nonlinear differential equations (ordinary or partial, and possibly stochastic), necessitate the use of data-driven techniques for characterizing the dynamical system and forecasting future states. This task has garnered substantial interest in recent decades due its application in many fields, including energy forecasting~\cite{mohan2018data}, epidemiology~\cite{proctor2015discovering} 
finance~\cite{Pascucci2011}, atomistic simulations~\cite{Schutte2001}, fluid dynamics \cite{mezic2013analysis}, weather and climate forecasting \cite{scher2018toward}, and many more. 

Particular emphasis is placed on long-term forecasting, which, given any initial state distribution, aims to predict how it (or a given statistics thereof) will evolve over time, until a long-term horizon. 
The accuracy of long-term forecasting is of utmost importance for effective strategic planning and early warning systems. However  structured data modalities, an increasing volume of observations, and highly non-linear relationships among covariates pose significant challenges to current long-term forecasting approaches. {In this work, we specifically address the problem of long-term forecasting for ergodic dynamical systems, whose states converge to an unknown but invariant distribution over time.}

The Koopman\footnote{Historically, the Koopman operator was introduced for deterministic dynamical systems, while the transfer operator is its analogue in the stochastic case. The results presented in this paper, however, apply to both settings.} operator regression (KOR) framework to learn dynamical systems from data %
{became popular in the last few years as it enables its users to accomplish several important tasks} including interpretation, control and forecasting; see, for example, the monographs~\cite{Brunton2022,Kutz2016} for an introduction to these topics. {The very same framework is also used to forecast state \textit{distributions} by means of the duality relation connecting the Koopman operator (which evolves states and observables) and the Perron-Frobenius operator, which evolves distributions.} 
Kernel based algorithms to learn the Koopman or transfer operators have been proposed in  %
~\cite{Alexander2020,Bouvrie2017,Das2020,Klus2019,Kostic2023b,Kostic2022, Meanti2023, OWilliams2015, Bevanda2023, Hou2023}. 
Deep learning approaches, {on the other hand}, were explored in the works~\cite{Bevanda2021,Fan2021,Lusch2018, Azencot2020, Morton2018}.

Our aim it to improve over mainstream KOR estimators, whose performance is known to deteriorate as the forecasting horizon extends further into the future \cite{Kostic2022}. Within the setting of \textit{ergodic} dynamical systems, arbitrary initial state distributions are bound to converge to an unknown but unique \textit{invariant}\footnote{That is, invariant under the action of the Perron-Frobenius operator.} distribution. In this work we propose a paradigm to inject this prior knowledge into existing kernel-based algorithms. This, in turn, has the effect of producing estimators which more accurately forecast the long-term behavior of the state distribution than state-of-the art approaches \cite{Brunton2022,Kostic2022} that do not leverage this information. Our experiment in Fig.~\ref{fig:rel_MMD}, indeed, shows a decline in the distribution forecasting accuracy of classical estimators as the forecasting horizon extends. In contrast, when the same estimator is augmented by our \textit{deflate-learn-inflate} (DLI) paradigm, the predicted distribution flow is substantially more accurate.

\begin{figure}
  \centering
  \includegraphics[width=0.95\columnwidth]{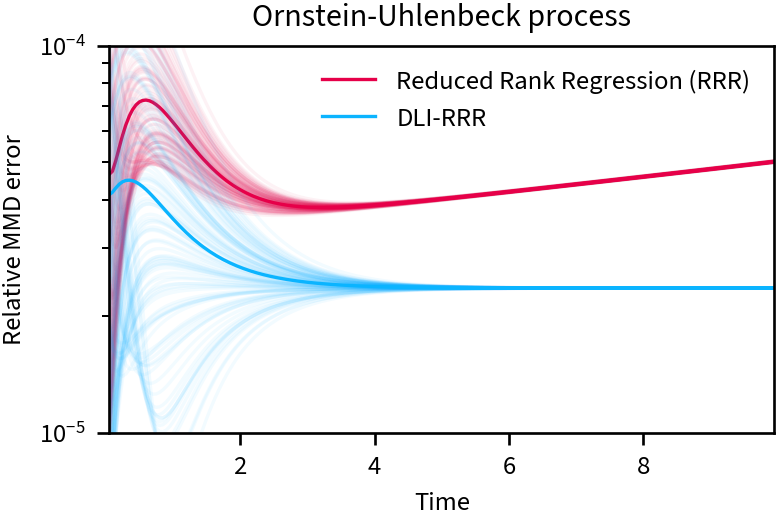} %
  \caption{\textit{Distribution forecasting}: relative MMD error for the Ornstein-Uhlenbeck process for 100 independent experiments (thin lines).}
  \label{fig:rel_MMD} %
\end{figure}

\paragraph{Contributions} We make the following contributions: 
(i) we introduce a novel distribution forecasting paradigm, dubbed deflate-learn-inflate (DLI), ensuring consistent forecasting quality across predicted future distributions, even for an infinite-time horizon. 
{This approach combines tools from kernel mean embeddings to capture the complex relationships between data points and their distributions, with classical techniques of \textit{deflation} of eigenvalues from operator theory, \textit{centering} features from statistics and the fundamental physics concept of \textit{preservation of mass}.
This paradigm can be seamlessly integrated into mainstream KOR estimators to enhance their long-term forecasting accuracy; 
(ii) 
We provide non-asymptotic bounds on the maximum mean discrepancy distance between the true state distributions and their forecasts which hold uniformly over the complete trajectory of future states even for infinite horizon; (iii) {We confirm our theoretical claims by testing the DLI paradigm on two different experimental settings.} %

\paragraph{Paper Organization} In Sec.~\ref{sec:koopman_framework}, we set the stage by recalling the Koopman/transfer operator framework and the use of duality to forecast state distributions.  %
In Sec.~\ref{sec:dli}, we present the DLI paradigm {and discuss its implementation, while} Sec.~\ref{sec:theory} contains our theoretical guarantees. Finally, in Sec.~\ref{sec:exp} we {test the DLI approach on two different settings, 
highlight its improved long-term forecasting performance.}

\paragraph{Notations} For two measures $\mu$ and $\nu$, $\mu\ll\nu$ means that $\mu$ is absolutely continuous w.r.t. $\nu$, in which case $d\mu/d\nu$ denotes the Radon-Nikodym derivative. Given a separable Hilbert space $\mathcal{H}$, we let $\HS{\mathcal{H}}$ be the Hilbert space of Hilbert-Schmidt (HS) operators on $\mathcal{H}$ endowed with the norm  $\hnorm{A}^{2} \equiv\sum_{i \in \mathbb{N}} \Vert Ae_{i} \Vert_{\mathcal{H}}^{2}$, for $A \in \HS{\mathcal{H}}$, where $(e_{i})_{i\in \mathbb{N}}$ is an orthonormal basis of $\mathcal{H}$. For any bounded operator $A$ on $\RKHS$, we denote by $\srad(A)$ and $\norm{A}$ the spectral radius and operator norm of $A$ respectively. Note that for any bounded operator $A$, we have $\rho(A)\leq \norm{A}$, see, e.g., \cite{TrefethenEmbree2020}.

\section{DISTRIBUTION FORECASTING}
\label{sec:koopman_framework}

In this section, we give some background on the Koopman/transfer operator framework, introduce empirical estimators and then present the kernel mean embedding approach to describe the flow of state distributions.

\paragraph{Background.}

Dynamical systems are a mathematical framework to describe the evolution of a state variable $x_t \in \X$ over time $t$. In this work, we focus on discrete systems, that is the time $t$ is an integer, so that the future state of the system $x_{t+1}$ depends on the current one $x_t$ via an equation, which is in general non-nonlinear and  may be either deterministic or stochastic. In this sense, a dynamical systems can be described by a Markov time homogeneous stochastic process $(X_t)_{t \in \N}$~\cite{Lasota1994}, where the random variables $X_t$ take values in a measurable space $\X$, endowed with $\sigma$-algebra $\Sigma_\X$. The Markov property means that
$$
\PP[X_{t+1}\,\vert\,(X_s)_{s=0}^t] = \PP[X_{t+1}\,\vert\,X_t],
$$ 
while time-homogeneity requires the r.h.s.~probability to be time independent,~that is there exists $\transitionkernel\colon \X \times \sigalg \to [0,1]$, called {\it transition kernel}, such that, for every $(x, B) \in \X \times \sigalg$ and $t\in \N$,
\[
\PP[X_{t+1} \in B \,\vert\,X_t=x] = \transitionkernel (x,B).
\]
We further assume that the dynamical system is {\em ergodic}, that is, there exists a \textit{unique} probability distribution $\im$, called \textit{invariant measure}, such that if $X_0\sim\im$, then $X_t\sim\im$, for every $t\in\mathbb{N}$. 

Ergodic dynamical systems %
are general enough to capture several important phenomena, including (discretized) Langevin dynamics~\cite{Davidchack2015} or other systems constructed from the discretization of stochastic differential equations. They can be studied via Markov operators, and, in particular with \emph{forward transfer operators} $\TrOp_t\colon\Lii\to\Lii$ defined on the space $\Lii$ formed by  square integrable functions w.r.t. the invariant measure as
\begin{equation}\label{eq:transfer-operators}
[\TrOp_{t}f](x) := \EE[ f(X_{t})\,\vert X_{0} = x],\;x\in\X, t\in\mathbb{N}. 
\end{equation}   
We refer the reader to \cite{Kostic2022} for a more detailed discussion and a proof of the fact that these operators are well defined on $\Lii$.

Due to their prominence in the data-driven (deterministic) dynamical systems community \cite{Kutz2016,Brunton2022}, we will also call $\TrOp_t$ the (stochastic) Koopman operators. These operators at different time steps are related through the Chapman-Kolmogorov equation $\TrOp_{t + s} \,{=}\, \TrOp_{t}\,\TrOp_{s}$, implying that the family $(\TrOp_{t})_{t\in\mathbb{N}}$ forms a semigroup~\cite{arnold1974}.~Notice that since we work with discrete processes it holds that $\TrOp_t \,{=}\, \Koop^t$, where with some abuse of notation we have defined $\Koop\,{=}\,\TrOp_1$, to highlight the dependence of the Koopman operator on the invariant distribution $\pi$.  

The significance of Koopman operators lies in their ability to effectively \textit{linearize} the underlying Markov processes. In particular in this work we are interested in using these operators to characterize the flow $(\mu_t)_{t \in \N}$ of the dynamical system, where $\mt{t}$ is the law (or marginal distribution) of $X_t$. 
To simplify the discussion, 
it is convenient to assume that each distribution $\mt{t}$ is absolutely continuous w.r.t. the invariant measure $\im$, that is, it has a density $\dt{t}:=d\mt{t} / d\im \in L^1_\im(\X)$ defined via the Radon-Nikodyn derivative. If in addition the densities are square-integrable, then, for every $\dt{0}\in\Lii$, the flow of the probability distributions $(q_t)_{t \in \N}$ follows \textit{linear dynamics} in the space $\Lii$, given by the equations
\begin{equation}
q_t = \TrOp_t^* q_{0} = (\Koop^*)^t q_0,\quad t \in \N.
    \label{eq:density_flow}
\end{equation}
The operator $\Koop^*$, known as {\it Perron-Frobenius operator}, is the adjoint of the Koopman operator, and it is given, for every $q \in \Lii$, by 
\begin{align}
\label{eq:Astar_def}
[\Koop^* q](y) = \int \transitionkernel^*(y,dx) q(x)
\end{align}
where $p^*$ is the time-reversal transition kernel, defined as 
$p^*(y,B) \,{=}\, \PP[X_{t-1}\,{\in}\, B|X_t \,{=}\, y]$, for every $B \,{\in}\, \Sigma$ and $y \,{\in}\, \X$, see \cite[App.~A]{Kostic2022} for details.
An implication of \eqref{eq:density_flow} is that, once linearized, the process can be efficiently evolved from any initial density $q_{0}$ using the spectral theory of bounded operators. In particular, \eqref{eq:transfer-operators} and \eqref{eq:Astar_def} imply %
that 
\begin{equation}\label{eq:specrad}
\TrOp_{\im}\cf = \cf,\quad\text{ and } \quad\TrOp_{\im}^*\cf = \cf,\;  %
\end{equation}
where $\cf$ is the function in $\Lii$ which is almost everywhere equal to $1$. Note that, since $\norm{\TrOp_{\im}}=1$, its largest eigenvalue is equal to $1$. 
Moreover, since the process is ergodic, $\cf$ is the unique (up to scaling) eigenfunction associated to this eigenvalue~\cite{Lasota1994}.
In this work we aim to learn the Koopman operator so that the flow $(q_t)_{t \in \N}$ can be reliably forecasted from an initial distribution $q_0$. {We now discuss how Koopman operators can be learned from data, and next
we clarify how the same machine learning approaches can be repurposed to forecast \textit{distributions}.}

\paragraph{Koopman Operator Regression.} 
In recent years, the abundance of emerging machine learning algorithms has sparked a growing interest on data-driven dynamical systems. In this setting $\Koop$ is not known, and a key challenge is to learn it from data. 
An appealing class of KOR learning algorithms~\cite{Brunton2022,Kutz2016} aim to learn the Koopman operator on a predefined reproducing kernel Hilbert space (RKHS) $\RKHS$ consisting of functions from $\Lii$. Namely, let 
$$
k:\X\times\X \to \R
$$ 
be a symmetric and positive definite kernel function and $\RKHS$ the corresponding RKHS~\cite{aron1950}, with norm denoted as $\|\cdot \|_{\RKHS}$. We let $x\mapsto\fm{x}\equiv k(\cdot,x) \in \RKHS$ denote the {\em canonical feature map}~and assume, for every $x \in \X$, that $k(\cdot,x) \in \Lii$, which in turn implies that $\RKHS \subset \Lii$;~for an introduction to RKHS see, e.g.,~\cite[Chapter 4.3]{Steinwart2008}.

The spaces $\RKHS$ and $\Lii$ have, however, different norms. 
To handle this ambiguity, we use the {\em injection operator} $\TS:\RKHS \hookrightarrow \Lii$ such that, for all $f \in \RKHS$, the object $\TS f$ is the element of $\Lii$  which is pointwise equal to $f \in \RKHS$, but endowed with the appropriate $\Lii$ norm. Moreover, the adjoint of the injection is given, for every $f\in\Lii$, by
\begin{equation}
\label{eq:adjS}
\TS^*f = \EE_{X\sim\im}[f(X)\fm{X}]\in\RKHS. 
\end{equation}

In data-driven dynamical systems we are provided with a dataset\footnote{For simplicity we consider the i.i.d. setting, however our forthcoming analysis is directly applicable to sample trajectories; see \cite{Kostic2022} for a discussion.} $\Data_n := (x_{i}, y_{i})_{i = 1}^{n}$ of consecutive states 
sampled at equilibrium, 
and wish to learn the Koopman operator by minimizing the \textit{empirical risk}
\begin{equation}\label{eq:empirical_risk}
    {\ERisk}(\Estim) := \textstyle{\tfrac{1}{n}\sum_{i \in[n]}} \Vert\fm{y_{i}} - \Estim^{*}\fm{x_i}\Vert^{2},
\end{equation}
over operators $\Estim\colon \RKHS\to\RKHS$. 
Defining the sampling operators $\ES,\EZ\colon \RKHS\to\R^n$ as $\ES h:=\tfrac{1}{\sqrt{n}} (h(x_i))_{i\in[n]}$ and $\EZ h:=\tfrac{1}{\sqrt{n}} (h(y_i))_{i\in[n]}$, \eqref{eq:empirical_risk} can be equivalently written as 
$$
{\ERisk}(\Estim) = \Vert \EZ-\ES \Estim \Vert^2_{\rm HS}
$$
from which it is apparent that the estimators are of the form $\EEstim= \ES^* W \EZ$, for some $n \times n$ real matrix $W$. In particular, 
we mention three important estimators that are often used in applications: kernel ridge regresssion (KRR), principal component regression (PCR) and reduced rank regression (RRR), implementing different forms of regularization on the operator $G$, see \cite{Kostic2022} for more information and the explicit form of the estimators involving kernel matrices. KRR is defined as
\begin{equation}\label{eq:empirical_KRR_estimator}
 \ERKoop= \ECx_\reg^{-1} \ECxy,
\end{equation}
where $\ECx_{\reg} := \ECx + \reg\Id_{\RKHS}$, $\ECx= \ES^* \ES$ is the empirical covariance of the input, and $\ECxy= \ES^* \EZ$ is the empirical cross-covariance between input and output. PCR is given by 
\begin{equation}\label{eq:empirical_PCR_estimator}
 \EPCR = \SVDr{\ECreg^{-1}}\ECxy,
\end{equation}
where $\SVDr{\cdot}$ denotes the $r$-truncated SVD. 
The RRR estimator on the other hand is given by
\begin{equation}\label{eq:empirical_RRR_estimator}
   \ERRR = \ECx_{\reg}^{-1/2}\SVDr{\ECx_{\reg}^{-1/2}\ECxy}.
\end{equation}
From the estimator $\EEstim$ we can build an estimator of  $\TrOp_{\vert_{\RKHS}}\colon\RKHS\to\Lii$, the {\em restriction} of the Koopman operator to the RKHS, namely   $\TrOp_{\vert_{\RKHS}}=\Koop\TS$. Specifically, we estimate $\TrOp_{\vert_{\RKHS}}$ by the operator $\TS\EEstim\colon \RKHS\to\Lii$. The corresponding  estimation quality can be measured by the operator norm error
\begin{equation}
    \label{eq:Risk}
    \error(\EEstim):=\Vert
\Koop\TS-
\TS\EEstim\Vert,
\end{equation}
see~\cite{Kostic2023b} for a discussion of these ideas. Therefore, to properly learn dynamics, $\RKHS$ should be chosen so that: i) $\TrOp_{\vert_{\RKHS}}$ approximates well $\Koop$, i.e. the space $\RKHS$ needs to be big enough relative to the domain of $\Koop$; ii) the risk ${\ERisk}(\EEstim)$ 
should be small enough. When $\RKHS$ is an infinite-dimensional {\em universal} RKHS both requirements are satisfied~\cite{Kostic2022}, i.e. $\RKHS$ is dense in $\Lii$ and the error \eqref{eq:Risk} converges to zero w.r.t. sample size. 
\paragraph{Kernel Mean Embedding of the Flow.}
The RKHS framework naturally allows one to introduce a metric on the space of signed measures $\SM$. Specifically, the dual norm
\begin{equation}\label{eq:hnorm_star}
\dnorm{\mu}:=\sup_{\norm{h}_{\RKHS}\leq1}\int_{\X}h(x)\mu(dx),\;\mu\in\SM, 
\end{equation}
induces the weak$^*$ topology on $\SM$~\cite{Muandet2017}. The supremum above is attained at 
\begin{equation}\label{eq:kme}
\kme{\mu}:=\int_{\X}\fm{x}\mu(dx)=\EE_{X\sim\mu}[\fm{X}]\in\RKHS,
\end{equation}
which is known as the \textit{kernel mean embedding} (KME) of the measure $\mu$. Furthermore, for two signed measures $\mu$ and $\nu$ the square distance of the corresponding kernel mean embeddings,
\begin{equation}\label{eq:mmd}
\mmd{\mu-\nu}=\norm{\kme{\mu}-\kme{\nu}}^2_{\RKHS},\;\mu,\nu\in\SM,
\end{equation}
is called the \textit{maximum mean discrepancy} (MMD)~\cite{Muandet2017}.

Recalling that $\dt{t}=d\mt{t}/d\im$, and using the adjoint of the injection operator \eqref{eq:adjS} we have that $\kme{\mt{t}}=\TS^*\dt{t}$, $t\in\mathbb{N}$.  This way, we can transform the flow \eqref{eq:density_flow} in $\Lii$ to the flow $(\fm{\mu_t})_{t \in \N}$ in $\RKHS$. An advantage of this is that the latter can be estimated. Indeed, since $\TS \EEstim$ aims to estimate $\Koop \TS$, iterating the approximation $\TS^*\adjKoop \approx \EEstim^* \TS^*$, we see that, for every $t\in\mathbb{N}$,
$$
\kme{\mt{t}} =\TS^*(\adjKoop)^t \dt{0} \approx (\EEstim^*)^{t}\TS^*\dt{0} = (\EEstim^*)^{t} \kme{\mt{0}}.
$$
The r.h.s.~of the above equation gives the estimated flow of the process in the RKHS, 
\begin{equation}\label{eq:rkhs_flow}
\kme{\emt{t}}:=(\EEstim^*)^{t}\,\kme{\mt{0}}\in\RKHS,\;t\in\mathbb{N}.
\end{equation}
Notice that a property of empirical risk minimization estimators is that $\range(\EEstim^*)\subset\Span(\fm{y_i})_{i\in[n]}$. Consequently, $\kme{\emt{t}}$ is the embedding of a measure  $\emt{t} =\sum_{i\in[n]} w_{t,i}\delta_{y_i}$, for some  $w_t\in\R^n$.  Moreover, the MMD estimation error between the true and empirical measure is given by
\begin{equation}\label{eq:flow_error}
\dnorm{\emt{t}-\mt{t}}^2 \!=\! \Vert[(\EEstim^*)^{t}\TS^* \! - \! \TS^*(\adjKoop)^t]\dt{0}\Vert_{\RKHS}^2,
\end{equation}
that is $\dnorm{\emt{t}-\mt{t}}\leq \error_t(\EEstim)\norm{\dt{0}}_{\Lii}$, where we have defined the operator norm error after $t$ steps as $\error_t(\EEstim)= \Vert\Koop^t \TS - \TS \EEstim^{t}\Vert$. In particular, note that  $\error_1(\EEstim)=\error(\EEstim)$.

\begin{remark}[Impossibility of Long-Term Forecasting]
\label{rem:impossibility}
Even for ergodic dynamical systems considered in this work,
we cannot in general guarantee the consistency of the trajectories \eqref{eq:density_flow} and \eqref{eq:rkhs_flow}, that is, we cannot recover the dynamics properly for large time-horizons. To clarify this, we consider estimator $\EEstim$ admitting spectral decomposition $(\eeval_j,\widehat{\lefun}_j,\widehat{\refun}_j)_{j=1}^{n}$ where $\eeval_j\in \C$ and $\widehat{\lefun}_j$ and $\widehat{\refun}_j$ are complex-valued function with components in $\RKHS$. For most estimators $\EEstim$, we would have either $|\eeval_1|<1$ or $|\eeval_1|>1$. In the former case, we deduce that for any $h\in \RKHS$, we have $\langle \kme{\emt{t}}, h \rangle\rightarrow 0$ as $t\rightarrow \infty$, i.e, $\emt{t}\overset{\ast}{\rightharpoonup}0$ as $t\rightarrow \infty$ (weak topology convergence). Conversely when $|\eeval_1|>1$, the empirical dynamic can diverge while the true one remains bounded. 
{Indeed, while $\mt{t}\to\im$ for $t\to\infty$, we have that  $\dnorm{\emt{t}}\!\geq\! |\eeval_1|^{t} \norm{\elefun_1}\abs{\langle \kme{\mt{0}} , \erefun_1\rangle} - \sum_{j\geq2} |\eeval_j|^{t} \norm{\elefun_j}\abs{\langle \kme{\mt{0}} , \erefun_j\rangle}\to\infty$, as soon as $\langle \kme{\mt{0}} , \widehat{\refun}_1\rangle\neq 0$.} 
\end{remark}
A possible heuristic to address the above issue is to round the largest eigenvalue of the estimator $\EEstim$ to $1$. However, this approach proves to be overly simplistic and ineffective in practice. In the next section we discuss a  principled approach to solve this problem.

\section{DEFLATE-LEARN-INFLATE PARADIGM}
\label{sec:dli}
In this section we show how to overcome this failure of long term forecasting for Koopman operator regression (KOR) estimators based on \eqref{eq:rkhs_flow} using an \textit{estimator agnostic paradigm} that consists of three steps: 
\begin{itemize}
    \item[\textbf{(D)}] Compute the empirical KME of the invariant distribution and remove it from the KME of the initial condition (deflate);
    \item[\textbf{(L)}] Compute an estimator from data via empirical risk minimization using centered features (learn); 
    \item[\textbf{(I)\,}] Evolve the initial KME using the deflated estimator and add to it the empirical KME of the invariant measure (inflate).
\end{itemize}
We proceed to explain each of these steps in turn.

{\bf Deflate~} The first step is a classical idea in the field of numerical methods for eigenvalue problems, see, e.g.,  \cite{saad2011numerical}. In our context, recalling \eqref{eq:specrad}, it consists of removing (\textit{deflating}) the known eigenpair $(1,\cf)$ of the Koopman operator $\Koop$, in order to better estimate the unknown ones. Since the leading Koopman eigenvalue $\lambda_1(\Koop)=1$ is simple, its corresponding spectral projector is $\cf\otimes\cf$. Hence, we obtain the \textit{deflated} operator 
\begin{equation}\label{eq:deflated_koop}
\cKoop:=\Koop-\cf\otimes\cf = \Koop\J = \J \Koop, 
\end{equation}
where $\J:=I-\cf\otimes\cf$ is the orthogonal projector onto the orthogonal complement of the subspace of constant functions. Then, applying $\J$ to the flow \eqref{eq:density_flow} and, noticing that $\scalarp{\cf,\dt{t}}_{\Lii}=1$, $t\in\N$, we obtain that 
\begin{equation}\label{eq:deflated_flow}
\dt{t} - \cf =  (\adjcKoop)^t(\dt{0}-\cf),\;t\in\T,
\end{equation}
{which, since $\srad(\adjcKoop)=\srad(\cKoop)<1$, defines an \textit{asymptotically stable linear dynamical system} in shifted variables. Hence, $\dt{t} -  \cf\to 0$, that is, $\mt{t}\to \im$, when $t\to\infty$.}

{\bf Learn~} To show how one can learn the deflated operator, we start by applying $\TS^*$ to \eqref{eq:deflated_flow}, which using $\TS^*\cf = \kme{\im}$, gives
\begin{equation}\label{eq:deflated_flow_error}
\kme{\mt{t}}\!-\!\kme{\im} \!=\! \TS^*(\adjcKoop)^t(\dt{0}\!-\!\cf)\approx\!  (\cEEstim^*)^{t}\,(\kme{\mt{0}}\!-\!\kme{\im}).
\end{equation}
Thus the estimator $\cEEstim\colon\RKHS\to\RKHS$ is learned so that $\TS^*\J\adjKoop \approx \cEEstim^* \TS^*\J$, which denoting $\cTS:=\J\TS$, can be written as 
$\cTS^* \cKoop^* \approx \cEEstim^* \cTS^*$. 
In this way we can write the estimation error of the deflated operator as
\begin{equation}
\label{eq:centeredExcessRisk}
    \cerror(\cEEstim):=\Vert \cKoop\cTS-\cTS \cEEstim\Vert
\end{equation}

and formulate the problem of learning $\cKoop$ as the minimization of the empirical risk functional
\begin{equation}\label{eq:empirical_risk_deflated}
{\cERisk}(\cEEstim) \!:=\! \Vert \cEZ- \cES \cEEstim\Vert^2_{\rm HS},
\end{equation}
where we have defined the empirical operators $\cES\,{=}\,\Jn\ES$ and $\cEZ\,{=}\,\Jn\EZ$, with $\Jn$ the $n \times n$ orthogonal projection matrix $\Jn \,{=}\, I \,{-}\, \cvn\cvn^\top$, and the vector $\cvn=(1/\sqrt{n})[1,\,1\,\ldots,\,1]^\top \in\R^n$.

Next, introducing two empirical estimators of the invariant measure $\im$ associated to the input and the output points,
\begin{equation}\label{eq:emp_im}
\eimx:= \tfrac{1}{n}\textstyle{\sum_{i\in[n]}}\delta_{x_i}\;\text{ and }\eimy:=\tfrac{1}{n}\textstyle{\sum_{i\in[n]}}\delta_{y_i},    
\end{equation}
and observing that for the $j$-th standard basis vector $e_j\in\R^n$, we have that 
$$
\cES^*e_j = \ES^*(e_j\,{-}\, \tfrac{1}{\sqrt{n}}\cvn)\,{=}\, \tfrac{1}{\sqrt{n}} (\kme{x_j}\,{-}\,\kme{\eimx})
$$
and similarly that $\cEZ^*e_j = \tfrac{1}{\sqrt{n}} (\kme{y_j}\!-\!\kme{\eimy})$. Using these, we can rewrite the empirical risk \eqref{eq:empirical_risk_deflated} as
\begin{equation}\label{eq:empirical_risk_centered}
    {\cERisk}(\cEEstim) \!:=\! \tfrac{1}{n}\textstyle{\sum_{i \in[n]}} \Vert{(\fm{y_{i}}\! -\! \kme{\eimy})- \cEEstim^{*}(\fm{x_i}\!-\!\kme{\eimx})}\Vert^{2}.
\end{equation}

This shows an elegant connection between the two learning problems. Namely, any estimator $\EEstim$ of the Koopman operator $\Koop$ can be transformed into an estimator $\cEEstim$ of the deflated Koopman operator $\cKoop$ simply by \textit{empirically centering the feature map} of the kernel. Moreover, as we show in the following section, the deflated version $\cEEstim$ of an estimator $\EEstim$ can readily be statistically studied by replacing Koopman operator $\Koop$, injection $\TS$ and sampling operators $\ES$ and $\EZ$ by the projected ones $\cKoop$, $\cTS$, $\cES$ and $\cEZ$, respectively. 

{\bf Inflate~} Finally, we use an estimator $\cEEstim$ to obtain the empirical estimates of the flow $(\mt{t})_{t \in \N}$. For this purpose, we need to put back (\textit{inflate}) the leading eigenpair that we have removed during the deflate step. Recalling \eqref{eq:deflated_flow_error}, this is equivalent to adding the KME of the invariant measure $\im$ after the evolution. While we do not know $\im$, we have data sampled from it. So, if instead of $\kme{\im}$ we inflate $\kme{\eimy}\!=\!\EZ^*\cvn$, obtaining 
\begin{equation}\label{eq:deflated_flow_rkhs}
\kme{\emt{t}}=\kme{\eimy}\! +\! (\cEEstim^*)^{t}\,(\kme{\mt{0}}\!-\!\kme{\eimy}).
\end{equation}
We consider estimators obtained by minimizing the empirical risk \eqref{eq:empirical_risk_centered}. Consequently, the estimators are of the form
$\cEEstim = \cES^* W \cEZ$ for some $W\in\R^{n\times n}$. Then, minding that in practice one usually has an empirical estimate $\emt{0}=n_0^{-1}\sum_{i\in[n_0]} \delta_{z_i}$ instead of the initial true measure $\mt{0}$, we obtain that $\emt{t}=\sum_{i\in[n]} w_{t,i}\delta_{y_i}$ where the coefficient vector $w_t = (w_{t,i})_{i=1}^n$ is given by 
\begin{equation}\label{eq:evoleved_weigths}
w_t\! =\!\tfrac{1}{\sqrt{n}}\left(\cvn\! +\! (\Jn W^*\Jn \Kxy)^{t\!-\!1} w_0\right)\!,
\end{equation}
for
\begin{equation}\label{eq:init_weights}
    w_0= \Jn W^*\Jn(\Kxz\cv{n_0}\!-\!\Kxy\cvn)
\end{equation}
and kernel Gram matrices 
\[
\Kxy{:=}\tfrac{1}{n}[k(x_i,\!y_j)]_{i,j\in[n]} \text{ and }
\Kxz{:=} \tfrac{1}{\sqrt{n_0 n}}[k(x_i,\!z_j)]_{\substack{i\in[n] \\ j\in[n_0]}}.
\]
An outstanding property of such an estimator is the \textit{preservation of the probability mass} along all trajectory, since it holds that $\sum_{j\in[n]}w_{t,j}\! =\! 1$ due to the properties of the projector $\Jn$.

In summary,
the DLI paradigm to forecast state distributions can be implemented via the following few steps: 
i) Compute the \textit{centered} kernel Gram matrices,
ii) use a kernel estimator of choice to minimize~\eqref{eq:empirical_risk_centered}, obtaining the matrix $W$, and iii) use the matrix $W$ to compute the evolved weights via ~\eqref{eq:evoleved_weigths}-\eqref{eq:init_weights}. 
Note that the computational complexity added by DLI is modest as it only amounts to centering the kernels with a cost of $O(n^2)$. Every other step, indeed, incurs the same computational complexity as its non-DLI counterpart.

\section{UNIFORM FORECASTING BOUNDS}
\label{sec:theory}
In this section we prove that the DLI paradigm transforms Koopman estimators which are only one step ahead consistent, into estimators that achieve uniform consistency over time according to the MMD error. 

\begin{theorem}
\label{th:mainforecastingdistributions}
Let $\mu_0\ll\im$, assume that $\density_0=d\mt{0}/d\im\in\Lii$, and let $\emt{0}$ be an empirical estimate of $\mt{0}$. If $\dnorm{\mt{0}- \emt{0}} \vee \dnorm{\im - \eimy} \leq \varepsilon$ for some $\varepsilon>0$, then for every $\cEEstim$ estimator of $\cKoop$ and 
every $t \ge 1$:
\begin{equation*}
\dnorm{\emt{t}-\mt{t}}\le\pcon(\cEEstim)\left(\cerror(\cEEstim)\,\Vert\dt{0}\Vert \, 
\sumcon(\cKoop) + \varepsilon\right)+\varepsilon,
\end{equation*}
where we recall that $\cerror(\cEEstim)$ is given in \eqref{eq:centeredExcessRisk}, while
\begin{equation}\label{eq:powercons}
\sumcon(\cKoop):=\sum_{t=0}^{\infty}\Vert \cKoop^t\Vert \quad\text{ and }\quad
\pcon(\cEEstim):=\sup_{t\in\N_0}\Vert \cEEstim^t\Vert.
\end{equation}
\end{theorem}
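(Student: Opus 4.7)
The strategy is to start from the two closed-form expressions for the RKHS flow: the true flow, rewritten via~\eqref{eq:deflated_flow_error} as
\begin{equation*}
\kme{\mt{t}}=\kme{\im}+\cTS^{*}(\adjcKoop)^{t}(\dt{0}-\cf),
\end{equation*}
and the estimated flow~\eqref{eq:deflated_flow_rkhs}, written with the empirical initial measure $\emt{0}$ as $\kme{\emt{t}}=\kme{\eimy}+(\cEEstim^{*})^{t}(\kme{\emt{0}}-\kme{\eimy})$. Subtracting and adding $(\cEEstim^{*})^{t}(\kme{\mt{0}}-\kme{\im})=(\cEEstim^{*})^{t}\cTS^{*}(\dt{0}-\cf)$ (the second equality uses $\J(\dt{0}-\cf)=\dt{0}-\cf$ together with $\cTS^{*}=\TS^{*}\J$) splits the error into three pieces:
\begin{align*}
\kme{\emt{t}}-\kme{\mt{t}}
=&\,(\kme{\eimy}-\kme{\im})\\
&+(\cEEstim^{*})^{t}\bigl[(\kme{\emt{0}}-\kme{\mt{0}})-(\kme{\eimy}-\kme{\im})\bigr]\\
&+\bigl[(\cEEstim^{*})^{t}\cTS^{*}-\cTS^{*}(\adjcKoop)^{t}\bigr](\dt{0}-\cf).
\end{align*}
The first piece is bounded by $\varepsilon$ by hypothesis, and the second piece is bounded by $\pcon(\cEEstim)$ times a sum of two quantities each controlled by $\varepsilon$ (this is where the factor $\pcon(\cEEstim)\varepsilon$ in the statement originates, using that $\pcon(\cEEstim)\ge 1$ to absorb the constant).

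The main step is the third piece, which is the only place the one-step risk $\cerror(\cEEstim)$ enters. Here I would use the standard telescoping identity
\begin{equation*}
(\cEEstim^{*})^{t}\cTS^{*}-\cTS^{*}(\adjcKoop)^{t}=\sum_{k=0}^{t-1}(\cEEstim^{*})^{t-1-k}\bigl[\cEEstim^{*}\cTS^{*}-\cTS^{*}\adjcKoop\bigr](\adjcKoop)^{k},
\end{equation*}
verified by induction on $t$. Taking operator norms and using $\|\cEEstim^{*}\cTS^{*}-\cTS^{*}\adjcKoop\|=\|\cTS\cEEstim-\cKoop\cTS\|=\cerror(\cEEstim)$ yields a bound of $\pcon(\cEEstim)\,\cerror(\cEEstim)\,\sumcon(\cKoop)$ for this operator. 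Applying it to $\dt{0}-\cf$ and using $\|\dt{0}-\cf\|_{\Lii}^{2}=\|\dt{0}\|^{2}-1\le\|\dt{0}\|^{2}$ (since $\langle\dt{0},\cf\rangle_{\Lii}=1$) produces the term $\pcon(\cEEstim)\,\cerror(\cEEstim)\,\|\dt{0}\|\,\sumcon(\cKoop)$.

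Assembling the three bounds via the triangle inequality gives the stated estimate. The hard part is clearly the telescoping identity for the third piece, since it requires the intertwining relation $\cTS^{*}\adjcKoop=\cTS^{*}(\adjcKoop)$ (which works because $\J\adjcKoop=\adjcKoop$) and the definition of $\cerror(\cEEstim)$; the other two pieces are essentially perturbation arguments that only use boundedness of $(\cEEstim^{*})^{t}$ uniformly in $t$, i.e.\ finiteness of $\pcon(\cEEstim)$. Crucially, since $\srad(\cKoop)<1$, the geometric series $\sumcon(\cKoop)$ is finite, and if the estimator is chosen such that $\pcon(\cEEstim)<\infty$ (which is automatic as soon as $\srad(\cEEstim)<1$), the overall bound is indeed uniform in the horizon $t$, as required.
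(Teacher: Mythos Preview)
Your proof is correct and mirrors the paper's argument: the same three-term decomposition (invariant-measure error, initial-condition error propagated by $(\cEEstim^*)^t$, and operator error $(\cEEstim^*)^t\cTS^*-\cTS^*(\adjcKoop)^t$) and the same telescoping identity, only you work on the adjoint side while the paper writes $\cKoop^t\cTS-\cTS\cEEstim^t=\sum_{k=0}^{t-1}\cKoop^{k}(\cKoop\cTS-\cTS\cEEstim)\cEEstim^{t-1-k}$ directly. The one loose thread is your second piece, which genuinely produces $\pcon(\cEEstim)\cdot 2\varepsilon$ rather than $\pcon(\cEEstim)\cdot\varepsilon$; the paper's terse proof sketch records only a single $\varepsilon$ term here, so either a harmless constant is being absorbed or the statement is off by a factor of two---in any case this does not affect the substance of the bound.
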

\begin{proof}%
First, observe that \eqref{eq:deflated_flow} and \eqref{eq:deflated_flow_error}, imply, by adding and subtracting $\kme{\emt{0}}$, that the prediction error $\dnorm{\emt{t} - \mt{t}}$ can be upper bounded by $\norm{\cKoop^t\cTS \!-\! \cTS \cEEstim^t}\norm{\dt{0}\!-\!\cf}+\norm{\cEEstim^t}\dnorm{\emt{0}\!-\!\emt{0}} + \dnorm{\eimy\!-\!\im}$.
So, since after some algebra, 
\[
\cKoop^t\cTS - \cTS \cEEstim^t = \textstyle{\sum_{k=0}^{t\!-\!1}}\,\cKoop^{k} (\cKoop\cTS - \cTS \cEEstim) \cEstim^{t-1-k},
\] 
applying the norm %
we obtain the result. 
\end{proof}
{The above theorem introduces in \eqref{eq:powercons} important quantities in the study of asymptotically stable linear dynamical systems in a Hilbert space, that is systems governed by a bounded operator $A$ such that $\srad(A)<1$. For such systems we have that $\lim_{t\to\infty}\norm{A^t}=0$, but, depending on the \textit{normality} of the operator, the convergence might not be monotone. Namely, if $A$ is normal, that is $AA^*=A^*A$, then $\norm{A^t} = [\srad(A)]^t$ and, consequently, $\pcon(A)=1$ and $\sumcon(A)=1/(1\!-\!\srad(A))$. Moreover, in this case $1/\sumcon(A)$ coincides with the \textit{distance to instability} of $A$ 
\begin{equation}\label{eq:d2i}
\dtoins(A):=\sup_{z\in\C,\,\abs{z}\geq1}\norm{(A\!-\! zI)^{-1}}^{-1}
\end{equation}
that measures the distance of the operator's spectra to the unit circle relative to its sensitivity to perturbations, which for normal operators equals $1\!-\!\srad(A)$, see e.g.~\cite{TrefethenEmbree2020}.} 

{On the other hand, for nonnormal operator $A$, the sequence $(\norm{A^t})_{t\in\N_0}$ may exhibit a \textit{transient growth} before converging to zero can, see e.g.~\cite{ELFALLAH2002135}, that can be estimated by $\kreiss(A)\leq \pcon(A)\leq (e/2) [\kreiss(A)]^2$, where $\kreiss(A)$ is the \textit{the Kreiss} constant of $A$ defined as 
\begin{equation}\label{eq:kreiss}
\kreiss(A):=\sup_{z\in\C,\,\abs{z}>1}(\abs{z}\!-\!1)\norm{(A\!-\! zI)^{-1}}\geq1.
\end{equation} 
For highly nonnormal operators $\kreiss(A)\gg 1$, indicating a large transient growth, which is also related to much smaller distance to instability $\dtoins(A)\ll 1\!-\!\srad(A)$, and larger cumulative effect $\sumcon(A) \gg 1/(1\!-\!\srad(A))$. Nevertheless, the latter quantity always remains bounded, since due to $\limsup_{t\rightarrow \infty}\norm{A^t}^{1/t} =  \rho(A)<1$, there exists the smallest integer $\ell$ such that $\norm{A^\ell}<1$, and, consequently, $\sumcon(A)\leq \tfrac{1}{1\!-\!\norm{A^\ell}}\tfrac{\norm{A}^\ell\!-\!1}{\norm{A}\!-\!1}<\infty$.}

While due to  $\srad(\cKoop)< 1$ we have that $\sumcon(\cKoop)$ is bounded, in the following result, the proof of which can be found in App.~\ref{app:proof_main_results}, we show how one can also bound the transient growth of an empirical estimator by proving its concentration in the RKHS operator norm.
\begin{restatable}{lemma}{powerbound}
\label{lm:powerbound}
If $\cKoop\cTS = \cTS \cHKoop$ holds for a compact operator $\cHKoop\colon\RKHS\to\RKHS$, then $\srad(\cHKoop)<1$. Consequently $1\leq\kreiss(\cHKoop)<\infty$, $\dtoins(\cHKoop)>0$ and for every $\cEEstim$ such that $\norm{\cHKoop\!-\!\cEEstim} < \dtoins(\cHKoop)$ it holds that 
\begin{equation*}
\frac{\kreiss(\cHKoop) \dtoins(\cHKoop)}{\dtoins(\cHKoop)\!+\!\norm{\cHKoop\!-\!\cEEstim} } \leq\pcon(\cEEstim)\leq \frac{e}{2} \left[\frac{\kreiss(\cHKoop) \dtoins(\cHKoop)}{\dtoins(\cHKoop)\!-\!\norm{\cHKoop\!-\!\cEEstim} }\right]^2.
\end{equation*}
\end{restatable}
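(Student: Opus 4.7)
The plan is to deduce the two-sided bound on $\pcon(\cEEstim)$ from the Kreiss--Spijker inequality $\kreiss(A)\le\pcon(A)\le(e/2)[\kreiss(A)]^{2}$ already cited in the excerpt, applied to $\cEEstim$, after establishing a resolvent perturbation bound that relates $\kreiss(\cEEstim)$ to $\kreiss(\cHKoop)$ and $\dtoins(\cHKoop)$. I would first dispose of the qualitative claims. Compactness of $\cHKoop$ forces its nonzero spectrum to be pure point spectrum accumulating only at the origin, and the intertwining $\cKoop^{t}\cTS=\cTS\cHKoop^{t}$ shows that every nonzero eigenvalue $\mu$ of $\cHKoop$ whose eigenspace is not entirely contained in $\Ker\cTS$ is also an eigenvalue of $\cKoop$, so $|\mu|\le\srad(\cKoop)<1$; the residual case is excluded because the centered injection is injective on the finite-dimensional eigenspaces under consideration. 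The bounds $\kreiss(\cHKoop)\ge 1$, $\kreiss(\cHKoop)<\infty$, and $\dtoins(\cHKoop)>0$ then follow from elementary resolvent analysis on $\{|z|\ge 1\}$: the lower Kreiss bound from $(|z|-1)\norm{(A-zI)^{-1}}\to 1$ as $|z|\to\infty$, and the other two from continuity of $z\mapsto\norm{(\cHKoop-zI)^{-1}}$ on the closed exterior of the unit disk (which is disjoint from the spectrum) together with decay of the resolvent at infinity.

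The key analytic step is a resolvent perturbation identity. Setting $E:=\cEEstim-\cHKoop$, the factorisation $\cEEstim-zI=(\cHKoop-zI)(I+(\cHKoop-zI)^{-1}E)$ together with a Neumann series, valid whenever $\norm{(\cHKoop-zI)^{-1}}\norm{E}<1$, yields
\begin{equation*}
\norm{(\cEEstim-zI)^{-1}}\le\frac{\norm{(\cHKoop-zI)^{-1}}}{1-\norm{(\cHKoop-zI)^{-1}}\norm{E}}.
\end{equation*}
Since $\norm{(\cHKoop-zI)^{-1}}\le\dtoins(\cHKoop)^{-1}$ uniformly over $|z|\ge 1$, the hypothesis $\norm{E}<\dtoins(\cHKoop)$ makes the denominator uniformly positive; multiplying by $(|z|-1)$ and taking the supremum gives
\begin{equation*}
\kreiss(\cEEstim)\le\frac{\kreiss(\cHKoop)\dtoins(\cHKoop)}{\dtoins(\cHKoop)-\norm{E}}.
\end{equation*}
Rearranging the same identity (equivalently, swapping the roles of $\cHKoop$ and $\cEEstim$ and using $\norm{-E}=\norm{E}$) produces the lower bound
\begin{equation*}
\kreiss(\cEEstim)\ge\frac{\kreiss(\cHKoop)\dtoins(\cHKoop)}{\dtoins(\cHKoop)+\norm{E}}.
\end{equation*}

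The conclusion then follows by plugging these two bounds into the Kreiss--Spijker inequality applied to $\cEEstim$: the upper estimate on $\pcon(\cEEstim)$ uses $\pcon(\cEEstim)\le(e/2)[\kreiss(\cEEstim)]^{2}$ together with the upper bound on $\kreiss(\cEEstim)$, while the lower estimate uses $\pcon(\cEEstim)\ge\kreiss(\cEEstim)$ with the corresponding lower bound. The main obstacle I expect is the initial spectral step: one must carefully handle nonzero eigenvalues of $\cHKoop$ whose eigenvectors might be annihilated by $\cTS$ in order to legitimately transfer the bound $\srad(\cKoop)<1$ to $\srad(\cHKoop)<1$, which requires invoking injectivity properties of the centered inclusion rather than being a purely algebraic consequence of $\cKoop\cTS=\cTS\cHKoop$. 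A secondary technical point is to apply the resolvent perturbation on the correct side, so that the supremum defining $\kreiss$ is genuinely tracked rather than being degraded to the weaker perturbation bound $\dtoins(\cEEstim)\ge\dtoins(\cHKoop)-\norm{E}$.
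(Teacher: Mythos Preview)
Your proposal is correct and follows essentially the same route as the paper: reduce to a perturbation bound on $\kreiss(\cEEstim)$ via the Kreiss--Spijker inequality, and obtain that bound from the second resolvent identity. The paper packages the resolvent step slightly differently, deriving the symmetric reciprocal inequality $\bigl|\,\norm{(\cEEstim-zI)^{-1}}^{-1}-\norm{(\cHKoop-zI)^{-1}}^{-1}\bigr|\le\norm{E}$ directly from $(\cEEstim-zI)^{-1}-(\cHKoop-zI)^{-1}=-(\cEEstim-zI)^{-1}E(\cHKoop-zI)^{-1}$, which yields both the upper and lower Kreiss bounds at once without any Neumann-series convergence condition; your ``swap the roles'' step is cleanest when justified this way rather than through a second Neumann series. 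For the spectral radius claim, the paper handles your stated obstacle exactly as you anticipate, by restricting $\cHKoop$ to $\Ker(\cTS)^{\perp}$ so that any leading eigenvector $h$ satisfies $\cTS h\neq 0$ and the eigenvalue transfers to $\cKoop$.
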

In view of Thm.~\ref{th:mainforecastingdistributions} and Lem.~\ref{lm:powerbound}, for any KOR estimator $\cEEstim$ such that $\cerror(\cEEstim)$ is small and $\pcon(\cEEstim)$ is bounded, we can guarantee that the MMD distance between the forecasted marginal distribution $\emt{t}$ and $\mt{t}$ is small for any $t\geq 1$, that is uniformly along the whole trajectory of future states. %

We present below our analysis for the centered KRR and defer the analysis for PCR and RRR to App. \ref{app:proof_main_results}. 
In the DLI paradigm, we only need to replace $\ECx$ and $\ECxy$ by their centered version $\cECx$ and $\cECxy$ respectively to define the deflated versions
$\cERKoop,\cEPCR,\cERRR$.

We now state our main assumptions on the centered operators. We use the symbol ‘‘$*$" to distinguish assumptions on the centered operators $\cCx$, $\cCxy$ from their standard (uncentered) operators $\Cx$ and $\Cxy$.

\begin{enumerate}[label={\rm \textbf{(BK)}},leftmargin=7ex,nolistsep]
\item
\emph{Boundedness.} \label{eq:BK} There exists $\bcon\,{>}\,0$ such that %
$\displaystyle{\esssup_{x\sim\im}}\, k(x,x)\leq \bcon$. 
\end{enumerate}
\begin{enumerate}[label={\rm \textbf{(RC*)}},leftmargin=7ex,nolistsep]
\item \emph{Regularity condition.} \label{eq:RC2} For some $\rpar\in[1,2]$ there exists $\rcon>0$ such that
$\cCxy \cCxy^* \preceq \rcon^2 \cCx^{1+\alpha}$.
\end{enumerate}
\begin{enumerate}[label={\rm \textbf{(SD*)}},leftmargin=7ex,nolistsep]
\item
\emph{Spectral Decay.} \label{eq:SD2}There exists $\spar\,{\in}\,(0,1]$ and a constant $\scon\,{>}\,0$ so that
$\eval_j(\cCx)\,{\leq}\,\scon\,j^{-1/\spar}$, $\forall j\in \N$.
\end{enumerate} 
Assumptions \ref{eq:BK} and \ref{eq:SD} are taken from the works \cite{Fischer2020,Li2022} on kernel mean embeddings. Assumption \ref{eq:RC} was introduced in \cite{Kostic2023b} to study KOR estimators. %

First, note that $\cCx\! = \!\cTS^*\cTS \!=\!   \TS^*\J\TS \!\preceq\! \TS^*\TS\!=\!\Cx$, and $\range(\Koop\TS)\!\subseteq\!\range(\TS)$ implies $\range(\cKoop\cTS)\!\subseteq\!\range(\cTS)$. Thus,  we have that conditions \ref{eq:RC2} and \ref{eq:SD2} are weaker than conditions \ref{eq:RC} and \ref{eq:SD}, respectively.
That is, if $\Cx$ and $\Cxy$ satisfy \ref{eq:RC} and \ref{eq:SD}, then the centered objects $\cCx$ and $\cCxy$ satisfy \ref{eq:RC2} and \ref{eq:SD2} with possibly smaller $\beta$.

{
The 
proof argument in \cite{Kostic2023b} 
relies at its core on the SVD decomposition of the injection operator $\TS$.~Hence, since the main difference between centered and uncentered operators arises from using the projected injection operator $\cTS$ instead of $\TS$, their analysis could be extended in an elegant way to control $\cerror(\cEEstim)$.} %
To this end, we only need to extend some bounds on whitened features to the centered case. 
\begin{restatable}{proposition}{propWhithened}\label{prop:whithened}  
Let \ref{eq:BK} and \ref{eq:SD2} hold for some $\spar\in(0,1]$, and let 
$\xi(x): =\cCreg^{-1/2}[\fm{x}-\kme{\im}]$.
Then there exist $\epar\in[\spar,1]$ and $\econ, c_{\spar}\in(0,\infty)$  such that for $\reg{>}0$
\begin{equation}\label{eq:whitened_norms}
\norm{\xi}^2 \leq c_{\spar} \reg^{-\spar} \;\text{ and }\; \norm{\xi}_\infty^2 \leq c_{\epar} \reg^{-\epar}.
\end{equation}
\end{restatable}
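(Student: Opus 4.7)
The plan is to establish the two bounds separately, exploiting the fact that the centered feature $\widetilde{k}_x := \fm{x}-\kme{\im}$ behaves like a standard RKHS feature whose (uncentered) second moment is precisely the centered covariance $\cCx$. In particular, a direct computation gives
\[
\EE_{X\sim\im}\big[\,\widetilde{k}_X\otimes \widetilde{k}_X\,\big] \;=\; \EE[\fm{X}\otimes\fm{X}]-\kme{\im}\otimes\kme{\im} \;=\; \Cx - \kme{\im}\otimes\kme{\im} \;=\; \cCx,
\]
so that the whitened centered features behave, covariance-wise, exactly like the Fischer--Steinwart whitened features but with respect to $\cCx$ instead of $\Cx$.

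For the first bound, I would interpret $\norm{\xi}^{2}=\EE_{X\sim\im}\big[\norm{\xi(X)}_{\RKHS}^{2}\big]$ and rewrite it, using the cyclic property of the trace and the covariance identity above, as
\[
\norm{\xi}^{2} \;=\; \tr\!\big(\cCreg^{-1/2}\,\cCx\,\cCreg^{-1/2}\big) \;=\; \sum_{j\ge 1}\frac{\eval_{j}(\cCx)}{\eval_{j}(\cCx)+\reg}.
\]
This is the effective dimension of $\cCx$ at scale $\reg$, and the spectral decay hypothesis \ref{eq:SD2} then yields the bound by the standard argument: truncate the sum at $j_{0}\asymp(\scon/\reg)^{\spar}$, use $\eval_{j}/(\eval_{j}+\reg)\le 1$ for $j\le j_{0}$, and $\eval_{j}/(\eval_{j}+\reg)\le \eval_{j}/\reg\le \scon\,\reg^{-1}j^{-1/\spar}$ for $j>j_{0}$; summing the two contributions gives $\norm{\xi}^{2}\le c_{\spar}\reg^{-\spar}$ (with a logarithmic factor absorbed into $c_{\spar}$ when $\spar=1$).

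For the second bound, boundedness \ref{eq:BK} directly gives $\norm{\fm{x}}_{\RKHS}^{2}=k(x,x)\le\bcon$ and, by Jensen, $\norm{\kme{\im}}_{\RKHS}^{2}\le\EE[k(X,X)]\le\bcon$, hence $\norm{\widetilde{k}_x}_{\RKHS}^{2}\le 4\bcon$. Combining with the crude operator bound $\norm{\cCreg^{-1/2}}\le\reg^{-1/2}$, one obtains the pointwise estimate $\norm{\xi(x)}_{\RKHS}^{2}\le 4\bcon\reg^{-1}$, i.e.\ the conclusion with $\epar=1$ and $\econ=4\bcon$. Since $\spar\le 1$, this already places $\epar$ in $[\spar,1]$ and finishes the existence statement. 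To obtain a genuinely smaller exponent $\epar\in[\spar,1)$ when the kernel enjoys an additional embedding property, one would interpolate between this $L^{\infty}$ bound and the $L^{2}$-type bound of the first step via a Holder-type inequality on the spectral decomposition of $\cCreg^{-1}(\fm{x}-\kme{\im})$.

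The routine parts are the trace identity and the spectral sum; the only mildly delicate point is the centering itself, since one must verify that $\EE[\widetilde{k}_X\otimes\widetilde{k}_X]=\cCx$ rather than the raw $\Cx$, and that Jensen's inequality on $\norm{\kme{\im}}_{\RKHS}$ correctly gives a bound controlled by the same constant $\bcon$ from \ref{eq:BK}. Once these are in place, the proof reduces to the same calculation as in the uncentered setting of \cite{Fischer2020,Li2022,Kostic2023b}, with $\Cx$ replaced everywhere by $\cCx$.
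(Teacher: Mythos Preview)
Your proof is correct for the proposition as stated (an existence claim), but the route to the $L^\infty$ bound differs meaningfully from the paper's.

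For the $L^2$ bound, you and the paper do the same thing: identify $\norm{\xi}^2=\tr(\cCreg^{-1}\cCx)$ and control the effective dimension via \ref{eq:SD2}. One small caveat: your truncation argument with $\eval_j(\cCx)\le \scon j^{-1/\spar}$ does not literally work when $\spar=1$, since the tail becomes a divergent harmonic series rather than producing a logarithm. The paper handles $\spar=1$ instead by using \ref{eq:BK}, namely $\tr(\cCreg^{-1}\cCx)\le \reg^{-1}\tr(\cCx)\le \reg^{-1}\bcon$ (or, more precisely, $\le \reg^{-1}c_\epar$ via the embedding representation). You should make that explicit rather than speaking of an absorbed log.

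For the $L^\infty$ bound, your crude estimate $\norm{\cCreg^{-1/2}}\le\reg^{-1/2}$ combined with $\norm{\widetilde{k}_x}^2\le 4\bcon$ gives the conclusion with $\epar=1$, which is enough for the existence statement. The paper takes a sharper route: it expands $\norm{\xi(x)}^2$ in the SVD basis $(\sigma_j,\ell_j,h_j)$ of $\cTS$, recognizes $\langle \widetilde{k}_x,h_j\rangle=(\cTS h_j)(x)=\sigma_j\ell_j(x)$, and after the inequality $\sigma_j^{2(1-\epar)}/(\sigma_j^2+\reg)\le \reg^{-\epar}$ arrives at $\norm{\xi(x)}^2\le \reg^{-\epar}\sum_j\sigma_j^{2\epar}|\ell_j(x)|^2$. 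The essential supremum of the latter sum is exactly the embedding constant $c_\epar$ of the interpolation space $\cRKHSs{\epar}\hookrightarrow L^\infty_\im(\X)$ (the kernel embedding property \ref{eq:KE}). This yields the bound for \emph{any} $\epar\in[\spar,1]$ for which such an embedding holds, with $c_\epar$ given explicitly, and this sharper exponent is what drives the variance rates downstream. Your approach is more elementary but only delivers $\epar=1$; the paper's SVD calculation is what makes the improvement to $\epar<1$ precise, rather than leaving it to an unspecified interpolation argument.
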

Therefore, variance control of estimators from \cite{Kostic2023b}  can be readily applied which leads to the learning rates for centered KRR, RRR and PCR estimators.

We now specify Thm.~\ref{th:mainforecastingdistributions} for the KRR estimator.  %
A similar result holds for the PCR and RRR estimators with $\rpar\in [1,2]$ (see App. \ref{app:proof_main_results} for more details). %
\begin{theorem}\label{prop:E_t_RRR}
Assume the operator $\cKoop$ satisfies $\rho(\cKoop) <1$.
Let \ref{eq:SD2} and \ref{eq:RC2} hold for some $\spar\in(0,1]$ and $\rpar\in(1,2]$, respectively. In addition, let $\cl(\range(\TS))=\Lii$ and \ref{eq:BK} be satisfied. Let
\begin{equation}\label{eq:opt_reg}
    \reg\asymp n^{-\frac{1}{\rpar+\spar}}\,\text{ and }\,\rate^\star_n:= n^{-\frac{\rpar}{2(\rpar+\spar)}}.
\end{equation}
Let $\delta\in(0,1)$. Then the forecasted distributions \eqref{eq:deflated_flow} based on KRR %
satisfy with probability at least $1-\delta$, %
\begin{equation*}
\label{eq:error_bound_dist_krr}
\dnorm{\emt{t}-\mt{t}}\!\le C \! %
\left( \rate_n^\star\,\ln(\delta^{-1})+ \sqrt{\frac{\ln \delta^{-1}}{n_{0} \wedge n}}\right),\; \forall t\geq 1,
\end{equation*}
where the constant $C$ may depend only on $a,b,\bcon$. 
\end{theorem}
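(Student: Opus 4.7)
The plan is to apply Theorem~\ref{th:mainforecastingdistributions} to the deflated KRR estimator $\cERKoop$ and bound each of its three constituents on a common high-probability event, combined by a union bound. The simplest piece is the empirical-measure term $\varepsilon$: under boundedness~\ref{eq:BK}, a standard Hilbert-valued concentration inequality for i.i.d.\ kernel mean embeddings gives
$$
\dnorm{\im-\eimy}\vee\dnorm{\mt{0}-\emt{0}} \lesssim \sqrt{\bcon\ln(\delta^{-1})/(n\wedge n_{0})}
$$
with probability at least $1-\delta/3$, which already produces the second summand in the stated bound.

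For the estimation error $\cerror(\cERKoop)$, the observation following Assumption~\ref{eq:SD2} that the centered operators inherit the regularity and spectral-decay conditions, combined with Proposition~\ref{prop:whithened} supplying the centered whitened-feature bounds, means that the KRR bias-variance analysis of \cite{Kostic2023b} transfers essentially verbatim to the centered setting. With the stated choice $\reg \asymp n^{-1/(\rpar+\spar)}$, this yields $\cerror(\cERKoop) \lesssim \rate^\star_n \ln(\delta^{-1})$ with probability $\geq 1-\delta/3$. Since $\srad(\cKoop)<1$ by assumption, the factor $\sumcon(\cKoop)$ is a finite constant (see the discussion right after Theorem~\ref{th:mainforecastingdistributions}), and $\|\dt{0}\|$ is absorbed into the leading constant as well. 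What remains is to show that the multiplier $\pcon(\cERKoop)$ stays bounded uniformly in $t$ and $n$. To that end, I would invoke Lemma~\ref{lm:powerbound}: well-specification $\cl(\range(\TS))=\Lii$ together with~\ref{eq:RC2} guarantees the existence of a compact $\cHKoop\colon\RKHS\to\RKHS$ with $\cKoop\cTS=\cTS\cHKoop$ and $\srad(\cHKoop)=\srad(\cKoop)<1$, so $\dtoins(\cHKoop)>0$ and $\kreiss(\cHKoop)<\infty$ are fixed positive constants. The same perturbation estimates used above, applied in the plain RKHS operator norm, yield $\|\cHKoop-\cERKoop\|=o(1)$ with high probability; hence for $n$ large enough this quantity is below $\tfrac{1}{2}\dtoins(\cHKoop)$ on an event of probability $\geq 1-\delta/3$, and Lemma~\ref{lm:powerbound} gives $\pcon(\cERKoop)\leq 2e\,\kreiss(\cHKoop)^{2}$.

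A union bound over the three events and substitution back into Theorem~\ref{th:mainforecastingdistributions} deliver the claimed uniform-in-$t$ inequality, with an absolute constant $C$ depending only on $\rcon$, $\scon$, $\bcon$ and the spectral data of $\cHKoop$. The main obstacle is the last step: the KRR bias-variance analysis most naturally controls the injection-weighted error $\|\cTS(\cHKoop-\cERKoop)\|$ appearing in $\cerror(\cERKoop)$, whereas Lemma~\ref{lm:powerbound} requires the unweighted RKHS operator norm $\|\cHKoop-\cERKoop\|$. Bridging this gap is what forces the simultaneous assumptions of the source condition~\ref{eq:RC2} (to make $\cHKoop$ well-defined on $\RKHS$) and the universality-type condition $\cl(\range(\TS))=\Lii$, and constitutes the main technical content of the argument.
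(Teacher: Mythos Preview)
Your proposal is correct and follows essentially the same architecture as the paper: apply Theorem~\ref{th:mainforecastingdistributions}, control the KME error $\varepsilon$ by standard concentration under~\ref{eq:BK}, control $\cerror(\cERKoop)$ by transferring the KRR analysis of \cite{Kostic2023b} to the centered setting via Proposition~\ref{prop:whithened}, and control $\pcon(\cERKoop)$ through Lemma~\ref{lm:powerbound} once $\|\cHKoop-\cERKoop\|$ is made small.

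The one place where the paper is more explicit than you is the unweighted operator-norm bound that you correctly flag as the main obstacle. Rather than saying ``the same perturbation estimates used above, applied in the plain RKHS operator norm,'' the paper splits $\|\cHKoop-\cERKoop\|\le\|\cHKoop-\cRKoop\|+\|\cRKoop-\cERKoop\|$ through the population KRR $\cRKoop=\cCreg^{-1}\cCxy$. The first piece is handled by a dedicated lemma (Lemma~\ref{lm:Gh-Ggamma-KRR}) giving $\|\cHKoop-\cRKoop\|\le \rcon\,\reg^{(\rpar-1)/2}$ directly from~\ref{eq:RC2}; the second uses Proposition~16 of \cite{Kostic2023b} to obtain a rate $\lesssim (n\reg^{\spar+1})^{-1/2}\ln(\delta^{-1})$. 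With $\reg\asymp n^{-1/(\rpar+\spar)}$ both pieces vanish precisely when $\rpar>1$, which is why the theorem is stated for $\rpar\in(1,2]$ rather than $[1,2]$. These are \emph{not} the same estimates as for $\cerror(\cERKoop)=\|\cTS(\cHKoop-\cERKoop)\|$ (which yields the faster rate $\rate_n^\star$), so your phrase ``the same perturbation estimates'' is slightly misleading; but the argument you sketch is otherwise on target, and you are right that the constant in the final bound really also depends on $\kreiss(\cHKoop)$, $\dtoins(\cHKoop)$, $\sumcon(\cKoop)$ and $\|\dt{0}\|$, not only on $\rcon,\scon,\bcon$.
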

\begin{proof}[Proof Sketch]
Recall that the centered population {({\bf KRR})} model is defined as $\cRKoop = \cCx_\reg^{-1} \cCxy$ where $\cCx_{\reg} := \cCx + \reg\Id_{\RKHS}$, while the empirical estimator is $\cERKoop = \cECx_\reg^{-1} \cECxy$.  Now, in view of Thm.~\eqref{th:mainforecastingdistributions} and Lem.~\ref{lm:powerbound}, it suffices to prove that $\norm{\cHKoop\!-\!\cERKoop}\leq \dtoins(\cHKoop)/2$ w.h.p. and derive the learning rate for $\cerror(\cERKoop)$. First, since, $\norm{\cHKoop\! - \!\cERKoop}\leq \norm{\cHKoop \!-\! \cRKoop} \!+\! \norm{\cRKoop\! -\! \cERKoop}
$, Lem.~\ref{lm:Gh-Ggamma-KRR} in App. \ref{app:proof_main_results} gives that $ \norm{\cHKoop - \cRKoop}\leq  \rcon \reg^{(\rpar-1)/2}$. Next, using \cite[Proposition 16]{Kostic2023b}, for the KRR estimator we obtain that 
$\PP\left\lbrace\norm{\cERKoop\!-\! \cRKoop}\leq C
\frac{\ln(2\delta^{-1})}{\sqrt{n\reg^{\spar+1}}}\right\rbrace\! \geq\! 1\!-\!\delta$. Thus, provided that $\rpar>1$ and $n$ is taken large enough, with our choice of $\reg$, we can guarantee that $\PP\left\lbrace\norm{\cHKoop\!-\!\cERKoop}\leq \dtoins(\cHKoop)/2\right\rbrace\geq 1-\delta$. Next, by applying Propositions 5 and 16 from \cite{Kostic2023b}, we obtain that $\cerror(\cERKoop)\leq C
\rate_n^\star\,\ln(\delta^{-1})$.
Finally, \cite[Lem.~1]{briol2019statistical} guarantees with probability at least $1-\delta$ that $\dnorm{\mt{0}- \emt{0}} \le \epsilon$ with $\epsilon = \sqrt{\frac{2}{n_{0}}\bcon}\left( 1 + \sqrt{\log(\delta^{-1})}\right)$. A similar bound also holds for $\dnorm{\im - \eimy}$ with $n_{0}$ replaced by $n$. Finally, an union bound combining the previous results with Thm.~\ref{th:mainforecastingdistributions} yields the result
with probability at least $1-4\delta$. Up to a rescaling of the constants, we can replace $1-4\delta$ by $1-\delta$.%
\end{proof}
According to Thm.~\ref{prop:E_t_RRR}, the DLI paradigm enables learning operators that can reliably forecast future state distributions, uniformly over time. 
Notice that in practice, we can easily sample from $\mt{0}$, so we can make $n_0 \geq n$ and then the dominating term in \eqref{eq:opt_reg} is $\rate^\star_n$, which depends only on the spectral properties of the kernel embedding and the transfer operator.

\section{EXPERIMENTS}
\label{sec:exp}

In this section, we illustrate the practical impact of the DLI paradigm through two specific applications, where we trained the most advanced~\cite{Kostic2023b} low-rank Koopman estimator currently available, that is Reduced Rank Regression (RRR), and compared it to its DLI-enhanced counterpart. In both examples, the DLI paradigm boosts the performance of the bare RRR model in distribution forecasting. 

In both models a Gaussian kernel has been used and the hyperparameters have been optimized via grid search on a validation split.
\paragraph{Ornstein-Uhlenbeck model.}
We study the uniformly sampled Ornstein-Uhlenbeck process $dX_{t} = -\theta X_{t}dt + \sigma dW_{t}$, where 
$\theta, \sigma > 0$ and $W_{t}$ is a Wiener process. Integrating the stochastic differential equation shows that the probability flow of $X_{t}$ (given that $X_{0} = x$) is
    $\mt{t} = \mathcal{N}\big(xe^{-\theta t}, \frac{\sigma^{2}}{2\theta}(1 - e^{-2\theta t}) \big)$,
yielding an invariant measure $\im = \mathcal{N}(0, \sigma^{2}/2\theta)$. We investigate the equilibration of an Ornstein-Uhlenbeck process with initial condition $X_{0}$ drawn from a Gaussian mixture with means $\{ -2, 2\}$ and variances $\{0.04, 0.04\}$, respectively. In Fig.~\ref{fig:rel_MMD} we report the predicted probability flow, as well as the {\em relative} MMD $\dnorm{\emt{t}-\mt{t}}/\dnorm{\mt{t}}$ attained by DLI and uncentered estimators. 
Note that the DLI paradigm leads to a consistent improvement in forecasting performance throughout the entire trajectory.

In a recent work, \cite{Kostic2023b} studied Koopman operator learning via RKHS, highlighting that adoption of a universal kernel, although powerful in many aspects, 
can introduce metric distortions. This, in turn, gives rise to a non-normal Koopman operator within the RKHS utilized 
by the learning algorithm. Interestingly, this non-normality triggers a transient growth regime within the system's dynamics, resulting in short-term instability before eventually converging towards a stable, long-term behavior~\cite{TrefethenEmbree2020}.  
This intriguing phenomenon was empirically observed in Fig.~\ref{fig:rel_MMD}, where a Gaussian kernel is used. While the true Koopman operator is self-adjoint, its representation within the RKHS induced by the RBF kernel is non-normal. This resulted in a hump in MMD error for both the RRR model and its DLI version during the short-term forecasting, as anticipated by our theory.

\paragraph{US Mortgage rates.}
\begin{figure}
  \centering
  \includegraphics[width=0.95\columnwidth]{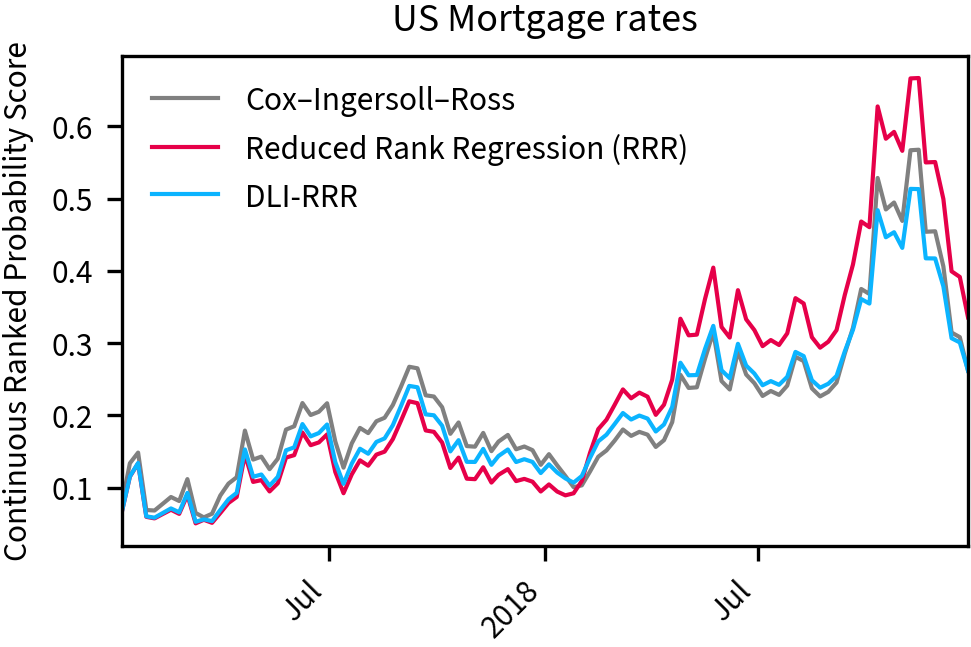}
  \caption{ Continuous ranked probability score attained by different esimators of the 30-year US mortgage rates. Weekly observations.}
  \label{fig:crps}
\end{figure}

In our next experiment, we study weekly historical data for the 30-Year fixed-rate mortgage averages in US. Interest rates are commonly modeled by the Vasicek~\cite{Vasicek1977} model, an Ornstein-Uhlenbeck process with non-zero mean, or the more advanced Cox–Ingersoll–Ross~\cite{Cox1985} model (CIR), which are both known to have an invariant distribution. We compare a least-squares calibration of the latter against a Gaussian-kernel RRR and its DLI enhancement, showing improved performance of the DLI estimator. Each model is evaluated according to the {\em continuous ranked probability score}\footnote{Despite its name, it is actually a loss function and not a score function.}~\cite{Hersbach2000} (CRPS), which measures the difference between the predicted and occurred cumulative distributions. Indeed, if at time $t$ a mortgage rate $x_{t}$ has been observed, and by letting $\hat{F}_{t}(x):= {{\rm cdf}}(\emt{t})(x)$, the CRPS is defined as as $${\rm CRPS}(\emt{t}) := \int_{-\infty}^{x_{t}} \hat{F}_{t}^{2}(x)dx + \int_{x_{t}}^{\infty} (1 - \hat{F}_{t}(x))^{2}dx,$$
that is the $L^{2}$-error between the cumulative distribution functions of $\emt{t}$ and $\delta_{x_{t}}$. In Fig.~\ref{fig:crps} we plot the CRPS for the predicted mortgage rates of the years 2017 and 2018, using as initial condition the rates for the last week of 2016. Each model has been trained (or calibrated) using data from Jan 2009 to Dec 2016. The CIR model relies on prior knowledge of the equation of evolution to predict future mortgage rates. Consequently, it involves solving a parametric problem to estimate the two unknown parameters of this model. In contrast, the DLI approach is entirely data-driven and outperforms the CIR model in long-term forecasting, as demonstrated in Fig. \ref{fig:crps}, and on average across the entire trajectory, as shown in Table \ref{table:crps-comparison}.

\begin{table}[t]
\begin{center}
\begin{tabular}{l|c}
\toprule
Model & Average CRPS  \\
\midrule
Cox--Ingersoll--Ross & 0.213 $\pm$  0.110\\
Reduced Rank Regression & 0.228 $\pm$  0.153 \\
\textbf{DLI-RRR} & \textbf{0.203} $\pm$ \textbf{0.106} \\
\bottomrule
\end{tabular}
\caption{Average continuous ranked probability score of the estimators in Fig.~\ref{fig:crps} for the years 2017 and 2018.}
\label{table:crps-comparison}
\end{center}
\end{table}

\section{CONCLUSIONS}
\label{sec:conclusion}
In this paper, we have studied data-driven approaches for the long-term  forecasting of ergodic discrete dynamical systems. These systems, which may be either deterministic or stochastic, are fully represented by the associated Koopman or transfer operator. We focused on the problem of predicting the flow of distributions of the state of the system from an initial distribution. Motivated by the observation that mainstream KOR estimators may fail at this task, we presented a conceptually simple and statistically principled approach which solves the above problem. This approach transforms the dynamical system into a linear deterministic asymptotically stable one. Properties of the deflated operator, determined by its possible nonnormality, play an important role in the bound. A compelling feature of our method is its agnostic nature towards the KOR estimator. Moreover, it offers the advantages of low computational complexity and seamless integration with standard kernel scaling methods. In the future it would be interesting to extend our method and analysis to continuous-time systems as well as 
tackle non-stationary processes. 

\bibliographystyle{plain}
\bibliography{bibliography.bib}

\newpage
\onecolumn

\appendix

\begin{center}
{\bf \Large Supplementary material}    
\end{center}

The appendix is organised as follows:
\begin{itemize}
    \item Appendix \ref{app:background} provides a summary of important notations in Table \ref{tab:notation}, additional background on the Koopman operator framework and the detailed algorithm (Alg. \ref{alg:forecasting}) used to implement the DLI paradigm.
    \item Appendix \ref{app:proof_main_results} contains the complete proofs of the results in the main body of this paper.
    \item Appendix \ref{app:exp_details} contains additional details on the experiments we performed in the main body of the paper.
\end{itemize}

\section{Background}\label{app:background}

\begin{table}[!h]
\centering
\begin{tabular}{c|c}
\toprule
notation & meaning   \\ 
\midrule
$\mt{t}$ & law of the state of process at the time $t$ \\ \hline
$\dt{t}$ & density of the law of the state of process at the time $t$ w.r.t. invariant distribution\\ \hline
$k(\cdot,\cdot)$ & symmetric positive definite kernel function \\ \hline
$\phi(x)$ & canonical feature map associated to $x\in\X$ also denoted by $\kme{x}$\\ \hline
$\kme{\nu}$ & kernel mean embedding of the measure $\nu$ \\ \hline
$\Koop$ & Koopman operator \\\hline
$\TS$ & canonical injection of $\RKHS \hookrightarrow\Lii$ \\\hline
$\TZ$ & restriction of the Koopman operator to $\RKHS$ \\ \hline
$\cf$ &  function in $\Lii$ with the constant output 1 \\ \hline
$\J$ & projection onto $\one^\perp$ in $\Lii$\\ \hline
$\cKoop$ & deflated Koopman operator \\\hline
$\cTS$ & projected canonical injection of $\RKHS \hookrightarrow\Lii$ \\\hline
$\cTZ$ & restriction of the deflated Koopman operator to $\RKHS$ \\ \hline
$\Risk$ & true risk \\ \hline
$\cRisk$ & true centered risk \\ \hline
$\ERisk$ & empirical risk \\ \hline
$\cERisk$ & empirical centered risk \\ \hline
$\error$ & true error \\ \hline
$\cerror$ & true centered error \\ \hline
$\cvn$ &  normalized constant vector in $\R^n$ with all components equal to $1/\sqrt{n}$ \\ \hline
$\Jn$ & projection onto $\Span(\one_n)^\perp$ in $\R^n$
\\\hline
$\ES$ & sampling operator of the inputs \\ \hline
$\EZ$ & sampling operator of the outputs \\\hline
$\Cx$ & covariance operator \\ \hline
$\ECx$ & empirical covariance operator \\\hline
$\Cxy$ & cross-covariance operator \\ \hline
$\cCx$ & centered covariance operator \\ \hline
$\cECx$ & centered empirical covariance operator \\\hline
$\ECxy$ & empirical cross-covariance operator \\\hline
$\cCxy$ & centered cross-covariance operator \\ \hline
$\cECxy$ & centered empirical cross-covariance operator \\\hline
$\Kx$ & input kernel matrix \\ \hline
$\Ky$ &  output kernel Gramm matrix \\\hline
$\Kreg$ & regularized input kernel matrix \\ \hline
$\cKx$ & centered input kernel matrix \\ \hline
$\cKy$ &  centered output kernel Gramm matrix \\\hline
$\cKreg$ & regularized centered input kernel matrix \\ \hline
$\B{\RKHS}$ & the set of bounded operators on $\mathcal{H}$\\ \hline
$\HSr$ & the set of operators on $\mathcal{H}$ of a finite rank at most $r$\\\hline
$\HS{\RKHS}$ & the set of Hilbert-Schmidt (HS) operators on $\mathcal{H}$\\ \hline
$\Spec(\cdot)$ & spectrum of a bounded operator \\\hline
$\srad(\cdot)$ & spectral radius of a bounded operator \\\hline
$\dtoins(\cdot)$ & distance to instability of a bounded operator \\\hline
$\kreiss(\cdot)$ & Kreiss constant of a bounded operator \\\hline
\bottomrule
\end{tabular}
\caption{Summary of used notations.}\label{tab:notation}
\end{table}
\renewcommand{\arraystretch}{1}

\subsection{Markov Transfer Operators}
Let  $\mathbf{X} := \left\{X_{t} \colon t\in \N \right\}$ be a family of random variables with values in a measurable space $(\X, \sigalg)$, called state space. We call $\mathbf{X}$  a  {\em Markov chain} if $\PP\{ X_{t+1} \in B \,\vert\, X_{[t]} \} = \PP\{X_{t + 1} \in B \,\vert\, X_t \}$. 
Further, we call $\mathbf{X}$ {\em time-homogeneous} if there exists $\transitionkernel\colon \X \times \sigalg \to [0,1]$, called {\it transition kernel}, such that,  for every $(x, B) \in \X \times \sigalg$ and every $t \in \N$,
\[
\mathbb{P}\left\{X_{t + 1} \in B \middle| X_{t} = x \right\} = \transitionkernel (x,B).
\]

A large class of Markov chains consists of those endowed with \textit{invariant measure}, denoted as $\im$, that satisfies the equation $\pi(B) {=} \int_{\X} \pi(dx)p(x,B)$, $B\in\sigalg$, see e.g.~\cite{prato1996}. For such cases, we can consider the space of square-integrable functions on $\X$ relative to the measure $\im$, denoted as $\Lii$, and define the \textit{Markov transfer operator}, %
$\Koop \colon \Lii\to\Lii$ 
\begin{equation}\label{eq:Koopman_app}
	[\Koop f](x) := \int_{\X} p(x, dy)f(y) = \mathbb{E}\left[f(X_{t + 1}) \middle | X_{t} = x\right], \quad f\in\Lii,\,x\in\X.
\end{equation}
Since it easy to see that $\norm{\Koop f} \leq \norm{f}$, we conclude that $\norm{\Koop}\leq1$, i.e. the Markov transfer operator is a bounded linear operator. Moreover, recalling that $\cf(x)=1$ for $\im$-a.e. $x\in\X$, since $\Koop\cf = \cf$, we see that $1\leq \srad(\Koop)\leq\norm{\Koop} \leq 1$, i.e. $\srad(\Koop)=\norm{\Koop}=1$. 

\subsection{Koopman Mode Decomposition (KMD)}
In dynamical systems, $\Koop$ is known as the (stochastic) \textit{Koopman operator} on the space of  observables $\F=\Lii$. An essential characteristic of this operator is its linearity, which can be harnessed for the computation of a spectral decomposition. Indeed, in many situations, especially when dealing with compact Koopman operators, there exist complex scalars $\lambda_i\in\C$ and observables $\refun_i\in\Lii$ that satisfy the eigenvalue equation $\Koop\refun_i \hspace{.05truecm}{=}\hspace{.105truecm} \lambda_i\refun_i$. Leveraging the eigenvalue decomposition, the dynamical system can be decomposed
into superposition of simpler signals that can be used in different tasks such as system identification and control, see e.g. \cite{Brunton2022}.
More precisely, given an observable $f\in\Span\{\refun_i\,\vert\,i\in\N\}$ there exist corresponding scalars $\gamma_i^f\in\C$ known as Koopman modes of $f$, such that
\begin{equation}\label{eq:koopman_MD}
\Koop^t f(x) = \EE[f(X_t)\,\vert\, X_0 = x] =  \sum_{i\in\N}\lambda_i^t \gamma^f_i \refun_i(x), \quad x\in\X,\,t\in\N.
\end{equation}
This formula is known as \textit{Koopman Mode Decomposition} (KMD) \cite{Budisic2012,AM2017}. It decomposes the expected dynamics observed by $f$ into \textit{stationary} modes $\gamma_i^f$ that are combined with \textit{temporal changes} governed by eigenvalues $\lambda_i$ and \textit{spatial changes}  
governed by the eigenfunctions $\refun_i$. We notice however that the Koopman operator, in general, is not a normal compact operator, hence its eigenfunctions may not form a complete orthonormal basis of the space which makes learning KMD challenging.

\subsection{Kernel Based Learning of Koopman Operators}
\label{app:kooplearn}
In many practical situations, the Koopman operator $\Koop$ is unknown, but data from one or multiple system trajectories are available. A learning framework called Koopman Operator Regression (KOR) was introduced in~\cite{Kostic2022} to estimate the Koopman operator $\Koop$ on $\Lii$ using reproducing kernel Hilbert spaces (RKHS). More precisely, consider an RKHS denoted as $\RKHS$ with kernel $k:\X \times \X \rightarrow \mathbb{R}$ \cite{aron1950}. Let $\phi :\X \to \RKHS$ be an associated feature map such that $k(x,y) = \scalarp{\phi(x),\phi(y)}$ for all $x,y \in \X$. We assume that $k(x,x) \leq c_{\RKHS} < \infty$, $\pi$- almost surely.  This ensures that $\RKHS \subseteq \Lii$, and the injection operator $\TS \colon \RKHS \to \Lii$, defined as $(\TS f)(x)=f(x)$ for $x\in\X$, along with its adjoint  $\TS^* \colon \Lii \to \RKHS$ are well-defined Hilbert-Schmidt operators~\cite{caponnetto2007,Steinwart2008}. Then, the Koopman operator, when restricted to $\RKHS$, is given by 
\[
\TZ \colon \RKHS\to\Lii.
\]
Unlike $\Koop$, the operator $\TZ$ is Hilbert-Schmidt, which allows us to estimate $\TZ$ by minimizing the following {\em risk} 
\begin{equation}
\label{eq:true_risk}
\Risk(\Estim)= \EE_{x\sim \im} \sum_{i \in \N}  \EE \big[ (h_i(X_{t+1}) - (\Estim h_i)(X_t))^2\,\vert X_t  = x\big]
\end{equation}
over Hilbert-Schmidt operators $\Estim\in\HS{\RKHS}$, where $(h_i)_{i\in\N}$ is an orthonormal basis of $\RKHS$.
Moreover, we can write down a bias-variance decomposition of the risk  $\Risk(\Estim)=\IrRisk + \ExRisk(\Estim) $, where 
\begin{equation}
    \label{eq:ex_ir_risk}
\IrRisk=\hnorm{\TS}^2-\hnorm{\TZ}^2\geq0\;\text{ and }\; \ExRisk(\Estim)=\hnorm{\TZ-\TS\Estim}^2,
\end{equation}
are the irreducible risk (i.e. the variance term in the classical bias-variance decomposition) and the excess risk, respectively. This can be equivalently expressed in the terms of embedded dynamics in RKHS as:
\begin{equation}
    \label{eq:true_risk_cme}
\underbrace{ \EE_{(X,Y)} \norm{\phi(Y) - \Estim^*\phi(X)}^2}_{\Risk(\Estim)} =  \underbrace{\EE_{(X,Y)}\norm{\CME(X) - \phi(Y) }^2}_{\IrRisk} +  \underbrace{\EE_{X\sim \im} \norm{\CME(X) - \Estim^*\phi(X)}^2}_{\ExRisk(\Estim)},
\end{equation}
 where $(X,Y)$
is has the joint probability measure of two consecutive states of the Markov chain, and the regression function $\CME\colon\X\to\RKHS$ is defined as $\CME(x):=\EE[\phi(X_{t+1})\,\vert\,X_t = x]=\int_{\X}p(x,dy)\phi(y)$, $x\in \X$, and is known as the \textit{conditional mean embedding} (CME) of the conditional probability $p$ into $\RKHS$. It was also shown that using universal kernels one can approximate the restriction of Koopman arbitrary well, i.e. excess risk can be made arbitrarily small $\inf_{\Estim\in\HS{\RKHS}} \ExRisk(\Estim)= 0$.

Therefore, to develop estimators one can consider the problem of minimizing the Tikhonov regularized risk   
\begin{equation}\label{eq:KOR_reg}
\min_{\Estim \in \HS{\RKHS}} \Risk^\reg(\Estim){:=}\Risk(\Estim) + \reg\hnorm{\Estim}^2,
\end{equation}
where $\reg>0$. Denoting the covariance matrix as $\Cx := \TS^*\TS = \EE_{X\sim\im} \phi(X)\otimes \phi(X)$ and cross-covariance matrix $\Cxy : = \TS^*\TZ = \EE_{(X,Y)} \phi(X)\otimes \phi(Y)$, and regularized covariance as $\Creg:=\Cx+\reg\Id_\RKHS$, one easily shows that $\RKoop:=\Creg^{-1} \Cxy$ is the unique solution of \eqref{eq:KOR_reg} which is known as the Kernel Ridge Regression (KRR) estimator of $\Koop$. 

Low rank estimators of the Koopman operator have also been considered. Notably, Principal Component Regression (PCR) estimator given by $\SVDr{\Cx}^\dagger \Cxy$, where $\SVDr{\cdot}$ denotes the $r$-truncated SVD of the Hilbert-Schmidt operator. However, it is observed that both KRR and PCR estimators can fail in estimating well the leading Koopman eigenvalues \citep{Kostic2023b}. To mitigate this, Reduced Rank Regression (RRR) estimator has been introduced in \cite{Kostic2022} as the optimal one that solves \eqref{eq:KOR_reg} with an additional rank constraint by minimizing over the class of rank-$r$ HS operators $\HSr:=\{\Estim \in\HS{\RKHS}\,\vert\,\rank(\Estim)\leq r\}$, where $1\leq r <\infty$, i.e.
\begin{equation}\label{eq:KOR_RRR}
\Creg^{-1/2} \SVDr{ \Creg^{-1/2} \Cxy } = \argmin_{\Estim \in \HSr} \Risk^\reg(\Estim).
\end{equation}

Now, assuming that data $\Data = \{(x_i,y_i)\}_{i\in[n]}$ is collected, the estimators are typically obtained via the regularized empirical risk $\ERisk^\reg(\Estim){:=}\frac{1}{n}\sum_{i\in[n]} \norm{\phi(y_i) - \Estim^*\phi(x_i)}^2 +  \reg\hnorm{\Estim}^2$ minimization (RERM). Introducing  the sampling operators for data $\Data$ and RKHS $\RKHS$ by
\begin{align*}
    \ES \colon \RKHS \to \R^{n} \quad \text{ s.t. }  f \mapsto \tfrac{1}{\sqrt{n}}[ f(x_{i})]_{i \in[n]} & \quad \text{ and } &  \EZ \colon \RKHS \to \R^{n} \quad \text{ s.t. }  f \mapsto \tfrac{1}{\sqrt{n}}[ f(y_{i})]_{i \in[n]},
\end{align*}
 and their adjoints by
\begin{align*}
   \ES^* \colon \R^{n} \to \RKHS \quad \text{ s.t. } w \mapsto \tfrac{1}{\sqrt{n}}\sum_{i\in[n]}w_i\fH(x_i) & \quad \text{ and } & \EZ^* \colon \R^{n} \to \RKHS \quad \text{ s.t. } w \mapsto \tfrac{1}{\sqrt{n}}\sum_{i\in[n]}w_i\fG(y_i),
\end{align*}
we obtain  $\ERisk^\reg(\Estim){=}\hnorm{\EZ {-} \ES \Estim}^2 + \reg\hnorm{\Estim}^2$.

In the following we also use the empirical covariance operator defined as
\begin{equation}\label{eq:empirical_cov}
\ECx := \ES ^*\ES =\tfrac{1}{n} \sum_{i\in[n]}\phi(x_i)\otimes \phi(x_i),%
\end{equation}
and the empirical cross-covariance operator
\begin{equation}\label{eq:empirical_cross-cov}
\ECxy := \ES ^*\EZ = \tfrac{1}{n} \sum_{i\in[n]}\phi(x_i)\otimes \phi(y_i).
\end{equation}
Additionally, we let $\ECx_\reg : = \ECx+\reg \Id_{\RKHS}$ be the regularized empirical covariance. %
Then
we obtain the empirical estimators of the Koopman operator on an RKHS that correspond to the population ones:
empirical KRR estimator $\ERKoop:=\ECreg^{-1} \ECxy$, empirical PCR estimator $\SVDr{\ECx}^\dagger \ECxy$, and empirical RRR estimator $\ECreg^{-1/2} \SVDr{ \ECreg^{-1/2} \ECxy }$.

Noting that all of the empirical estimators above are of the form $ \EEstim = \ES U_r V_r^\top \EZ$,  where $U_r,V_r \in\R^{n\times r}$ and $r\in[n]$ which are computed using 
(normalized) kernel Gramm matrices $\Kx := \ES\ES^*=\frac{1}{n}[k(x_i,x_j)]_{i,j \in [n]}$ and $\Ky:= \EZ\EZ^* = \frac{1}{n}[k(y_i,y_j)]_{i,j \in [n]}$, see \cite{Kostic2022}. In the next section (especially in Theorem \ref{thm:estimators} and Alg. \ref{alg:forecasting}) we explain how to compute these estimators in practice within the proposed DLI framework.

\subsection{Deflate-Learn-Inflate Estimation of Koopman Operator}
\textbf{1) Deflation: projecting the Koopman operator} \\ \\
This step consists in projecting the Koopman operator onto the subspace of $\Lii$ orthogonal to $\cf$: the operator to learn is no longer $\Koop$ but $\Koop - \cf \otimes \cf$. 
Denote by $\J = \Id - \cf \otimes \cf$ the orthogonal projector onto $\Span(\cf)^\top$ in $\Lii$, and two operators $\J$ and $\Koop$ commute since $\one$ is a left and right singular function of $\Koop$. Hence, we have that $\Koop - \cf \otimes \cf = \Koop \J =  \J\Koop = \J\Koop\J$, which we denote by $\cKoop$, which implies that the restriction of the deflated Koopman operator to $\RKHS$, i.e. $\cKoop\TS$ can be written as $\cTZ = \J\Koop \TS$, where $\cTS:=\J\TS$ is projected injection operator. 

\begin{remark}
In the specific context of Koopman regression the process of deflation is equivalent to centering the feature map. Indeed deflation leads to learning the Koopman operator projected onto the $\Lii$-orthogonal subspace of the constant function, that is zero mean functions. On the sample level, this means substracting the empirical mean of the data set, a procedure which is therefore equivalent to centering the feature map. Consequently, the following procedure is motivated by the theory of centering feature maps. 
\end{remark}

\noindent \textbf{2) Learn: compute estimators for} $\bm{\Koop \J}$ \\

Learning $\Koop \J$ is the most interesting part of the Deflate-Learn-Inflate process, based on regression techniques. Previous literature on Koopman learning (\cite{Kostic2022} \cite{Kostic2023b} \cite{Li2022} \cite{Klus2019} \cite{klus2018data}) provides three popular estimators based on a statistics approach which we proceed to introduce. Since the regression problem is now learning the projected Koopman operator $\Koop \J$ we will hereby, analogously to  \eqref{eq:ex_ir_risk}, introduce the risk associated to this learning problem for a Hilbert-Schmidt operator $\cEstim\in\HS{\RKHS}$ as $\cRisk(\Estim)=\cIrRisk + \cExRisk(\cEstim) $, where 
\begin{equation}
    \label{eq:centered_ir_risk}
\cIrRisk:=\hnorm{\J\TS}^2-\hnorm{\J\TZ}^2 = \hnorm{\cTS}^2-\hnorm{\cKoop\cTS}^2 \geq0,
\end{equation}
is the centered irreducible risk (i.e. the variance term in the classical bias-variance decomposition) 
and 
\begin{equation}
    \label{eq:centered_ex_risk}
\cExRisk(\Estim):=\hnorm{\J\TZ-\J\TS\cEstim}^2= \hnorm{\cTZ-\cTS\cEstim}^2,
\end{equation}
is the centered excess risk. After some algebra, similarly as in \cite{Kostic2022}, centered risk can be equivalently expressed as:
\begin{equation}
    \label{eq:centered_true_risk}
\cRisk(\cEstim)= \EE_{(X,Y)} \norm{(\kme{Y}-\kme{\im}) - \cEstim^*(\kme{X}-\kme{\im})}^2.
\end{equation}

The following section explains how an estimator $\cTS \cEstim$ for the Hilbert-Schmidt operator $\cTZ$ is computed using regression algorithms. 
Given a dataset $\Data := (x_{i}, y_{i})_{i = 1}^{n}$ of sampled consecutive states, we introduce the empirical mean square error of an estimator $\cEstim$ for $\cTZ$: 
\begin{equation}
    \label{eq:empirical_risk_app}
    \cERisk (\cEstim) = \frac{1}{n}\sum_{i = 1}^{n} \left\Vert \left( \phi(y_{i})- \frac{1}{n} \sum_{i = 1}^{n} \phi(y_{i}) \right) - \cEstim^{*}\left(\phi(x_i) - \frac{1}{n} \sum_{i = 1}^{n} \phi(x_{i}) \right)\right\Vert^{2}_{\rm{HS}} 
\end{equation}
which is merely the empirical version of \ref{eq:centered_true_risk}. 
the excess risk associated to an estimator $\cEstim$ for $\cTZ$. Notice that here we are centering the feature map on the input and output space (by removing the mean). We also introduce the regularised risk which defines two popular estimators: 
\begin{equation}
    \label{eq:regularised_empirical_risk}
    \cERisk_{\reg} (\cEstim) = \cERisk (\cEstim) + \reg \norm{\cEstim}_{HS}^2
\end{equation}

The framework of statistical learning translates the approximation problem to a minimisation problem on the space of Hilbert-Schmidt operator acting on $\RKHS$. This formulation was not introduced in the setting of Koopman operator regression until \cite{Kostic2022} and was key to giving some statistical insight onto the different learning techniques. The paper gives various statistical properties of three supervised learning algorithms: Kernel Ridge Regression (KRR), Principal Component Regression (PCR) and Reduced Rank Regression (RRR). The KRR estimator minimises the empirical regularised risk \eqref{eq:regularised_empirical_risk} whilst the RRR algorithm minimises the same regularised risk under a fixed rank constraint. On the other hand the PCR estimator does not minimise the empirical risk but projects the output on the leading r eigenvectors of the covariance operator, that is the vectors responsible for the most variability of the inputs. 

\medskip
All of these estimators can be expressed in a unified form. Namely, 
denoting the normalized constant vector in $\R^n$ by $\one_n :=n^{-1/2}[1, \dots, 1]^T$ and the orthogonal projector to orthogonal complement by $\Jn := \Id_n - \one_n \otimes \one_n$, we can introduce the projected sampling operators $\cES:=\Jn \ES$ and $\cEZ:=\Jn \EZ$, which, , recalling that $\kme{\im}=\EE_{X\sim\im}\phi(X)$,  $\eimx=\sum_{i\in[n]} \phi(x_i)$ and $\eimy=\sum_{i\in[n]} \phi(y_i)$, are used to define empirical versions 
\[\cECx := \cES^*\cES = \sum_{i\in[n]} \left(\phi(x_i)- \kme{\eimx}\right) \otimes \left(\phi(x_i)- \kme{\eimx}\right)
\text{ and } 
\cECxy := \cES^*\cEZ = \sum_{i\in[n]} \left(\phi(x_i)- \kme{\eimx}\right) \otimes \left(\phi(y_i)- \kme{\eimy}\right),
\]
of the centered covariance matrix as $\cCx := \cTS^*\cTS = \EE_{X\sim\im} \left(\phi(X)- \kme{\im}\right) \otimes \left(\phi(X)- \kme{\im}\right)$ and centered cross-covariance matrix $\cCxy : = \cTS^*\cTZ = \EE_{(X,Y)}\left(\phi(X)- \kme{\im}\right)\otimes \left(\phi(Y)- \kme{\im}\right)$, respectively. Furthermore, we can introduce centered Kernel matrices associated to the input points:
\begin{equation}\label{eq:centered_kernels}
\cKx := \Jn\Kx\Jn = \Kx - (\Kx\cvn)\cvn^\top - \cvn (\Kx\cvn)^\top + (\cvn^\top\Kx\cvn)\cvn\cvn^\top
\end{equation} 
and, analogously, the one associated to the output points: $\cKy := \Jn\Ky \Jn $. 
Then, a unified form for centered empirical estimators is: $\cEstim = \cES^*  W \cEZ = \ES^* \Jn W \Jn \EZ $, where $W$ is a square matrix of size n (the number of samples), which we will refer to as the matrix form of the estimator from now on. The following theorem gives the expression of the matrix form of the estimators derived by the previously discussed algorithms. The proofs closely follow those presented in \cite{Kostic2022} for the estimators of the Koopman operator $\Koop$. The reader is encouraged to read this paper which introduces the empirical risk minimisation problem.

\begin{theorem}
\label{thm:estimators}
The deflated Koopman operator restricted to the RKHS $\RKHS$, i.e.  $\cTZ=\cKoop\TS$, can be empirically estimated by $\cEEstim = \cES^* W  \cEZ$, where $W\in\R^{n\times n}$ is determined as follows:
    \begin{itemize}
        \item[(i)]  The Kernel Ridge Regression (KRR) algorithm yields  $\cEEstim = \cERKoop:= \cECreg^{-1}\cECxy$ and $W = (\cKx + \gamma I_n)^{-1}$.
        \item[(ii)]The Principal Component Regression algorithm (PCR) yields  $\cEEstim=\cEPCR:=\SVDr{\cECx}^\dagger \cECxy$ and $W = U_r V_r^\top$, where $[\![ \cKx ]\!] _r = V_r \Sigma_r V_r^\top$ is the r-trunacted SVD of $\cKx$ and $U_r:= V_r \Sigma_r^{\dag}$. 
        \item[(iii)] The Reduced Rank Regression algorithm (RRR) yields $\cEEstim=\cERRR:=\ECreg^{-1/2} \SVDr{ \ECreg^{-1/2} \ECxy }$ and $W = U_r V_r^\top$, where $V_r:=\cKx U_r$ and $U_r = [u_1 \vert \dots \vert u_r] \in \mathbb{R}^{n\times r}$ is such that $(\sigma_i, u_i)$ are the solutions to the generalised eigenvalue problem : 
        $$ \cKy \cKx u_i = \sigma_i^2 (\cKx + \gamma I_n) u_i \ \text{normalised such that} \ u_i^\top \cKx ( \cKx + \gamma I_n) u_i = 1.$$ 
    \end{itemize}
\end{theorem}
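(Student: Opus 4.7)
The plan is to follow the three corresponding proofs in \cite{Kostic2022} for the uncentered estimators, observing that deflation only replaces the sampling operators $\ES,\EZ$ by their projected counterparts $\cES{=}\Jn\ES$, $\cEZ{=}\Jn\EZ$. Since all the algebraic identities relied upon are operator identities that survive multiplication by $\Jn$, the proofs transfer verbatim. The key facts I would use throughout are: (a) $\cES\cES^*{=}\cKx$ and $\cEZ\cEZ^*{=}\cKy$; (b) $\cECx{=}\cES^*\cES$ and $\cECxy{=}\cES^*\cEZ$; (c) the push-through identity $(A^*A+\reg I)^{-1}A^* = A^*(AA^*+\reg I)^{-1}$ valid for any bounded $A$ (immediate from SVD); and (d) the spectral correspondence: if $\cKx v_i=\sigma_i^2 v_i$ with $\|v_i\|{=}1$, then $\sigma_i^{-1}\cES^*v_i$ is a unit eigenvector of $\cECx$ with the same eigenvalue.

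For (i), applying the push-through with $A=\cES$ gives $\cECreg^{-1}\cES^* = \cES^*(\cKx+\reg I_n)^{-1}$, so $\cERKoop = \cECreg^{-1}\cES^*\cEZ = \cES^*(\cKx+\reg I_n)^{-1}\cEZ$, identifying $W$. For (ii), by (d), $\SVDr{\cECx}^{\dagger} = \cES^* V_r \Sigma_r^{-2} V_r^\top \cES$; composing with $\cECxy=\cES^*\cEZ$ and using $\cES\cES^*=\cKx$ together with $v_i^\top\cKx=\sigma_i^2 v_i^\top$ collapses one power of $\Sigma_r^{-1}$, leaving $\cES^* V_r\Sigma_r^{-1} V_r^\top\cEZ = \cES^* U_r V_r^\top\cEZ$ with $U_r=V_r\Sigma_r^{\dagger}$.

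For (iii), I would characterize the left singular pairs $(\sigma_i,\psi_i)$ of $\cECreg^{-1/2}\cECxy$ via $\cECxy\cECxy^*\cECreg^{-1/2}\psi_i = \sigma_i^2 \cECreg^{1/2}\psi_i$, and make the ansatz $\psi_i = \cECreg^{1/2}\cES^* u_i$ for some $u_i\in\R^n$ (justified because the range of $\cECxy$ is contained in the range of $\cES^*$). Substituting and stripping $\cES^*$ from the left, together with $\cES^*\cEZ\cEZ^*\cES\cES^* u_i = \cES^*\cEZ\cEZ^*\cKx u_i$ and $\cES^*\cECreg\cES^* u_i= \cES^*(\cKx+\reg I_n) u_i$, reduces the problem to the finite-dimensional generalized eigenproblem $\cKy\cKx u_i = \sigma_i^2(\cKx+\reg I_n)u_i$. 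The normalization $\|\psi_i\|_\RKHS^2=1$ becomes $u_i^\top\cES\cECreg\cES^* u_i = u_i^\top\cKx(\cKx+\reg I_n)u_i = 1$ by expanding $\cES\cECreg\cES^*=\cKx^2+\reg\cKx$. Finally, the right singular vector is $\psi'_i = \sigma_i^{-1}\cECxy^*\cECreg^{-1/2}\psi_i = \sigma_i^{-1}\cEZ^*\cKx u_i$, so collapsing $\cECreg^{-1/2}\sum_i\sigma_i\psi_i\otimes\psi'_i$ yields $\cERRR = \cES^*\bigl(\sum_i u_i u_i^\top\cKx\bigr)\cEZ = \cES^* U_r V_r^\top\cEZ$ with $V_r=\cKx U_r$.

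I expect the RRR case to be the main obstacle, since it requires both a nontrivial ansatz for the singular vectors and a careful accounting of the normalization via the identity $\cES\cECreg\cES^*=\cKx(\cKx+\reg I_n)$; the subtlety lies in verifying that the finite-dimensional generalized eigenproblem captures exactly the top-$r$ singular structure of $\cECreg^{-1/2}\cECxy$ (i.e., that no singular pairs are missed by the ansatz), which amounts to noting that any left singular vector associated to a nonzero singular value necessarily lies in $\cECreg^{1/2}\,\mathrm{range}(\cES^*)$.
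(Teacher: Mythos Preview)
Your proposal is correct and takes essentially the same approach as the paper, which itself does not give a proof but states that the arguments ``closely follow those presented in \cite{Kostic2022}'' with the uncentered sampling operators replaced by their centered versions. Your sketch in fact supplies more detail than the paper: the push-through identity for KRR, the spectral transfer $\cKx\leftrightarrow\cECx$ for PCR, and the ansatz $\psi_i=\cECreg^{1/2}\cES^* u_i$ together with $\cES\cECreg\cES^*=\cKx(\cKx+\reg I_n)$ for RRR are exactly the right ingredients, and the normalizations check out. One very minor remark: your justification that the left singular vectors lie in $\cECreg^{1/2}\range(\cES^*)$ is correct but the immediate observation is that they lie in $\range(B)\subseteq\cECreg^{-1/2}\range(\cES^*)$; these two subspaces coincide because $\cECreg$ leaves the finite-dimensional subspace $\range(\cES^*)$ invariant and is invertible on it, so $\cECreg^{\pm 1/2}\range(\cES^*)=\range(\cES^*)$.
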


\noindent \textbf{3) Inflation: preservation of the probability mass} \\
Finally, we use $\cEEstim$ to obtain the empirical estimates of the flow $(\mt{t})_{t \in \N}$ by putting back the leading eigenpair that we have removed during the deflate step. In view of \eqref{eq:deflated_flow_rkhs}, we now define
\[
\kme{\emt{t}}=\kme{\eimy}\! +\! (\cEEstim^*)^{t}\,(\kme{\emt{0}}\!-\!\kme{\eimy}).
\]
where $\emt{0}=n_0^{-1}\sum_{i\in[n_0]} \delta_{z_i}$  is the initial empirical measure. Recalling \eqref{eq:evoleved_weigths}-\eqref{eq:init_weights},  using Theorem \ref{thm:estimators} we obtain Algorithm  \ref{alg:forecasting} that results in the sequence of empirical measures $\emt{t}=\sum_{i\in[n]} w_{t,i}\delta_{y_i}$, supported on the output points $(y_i)_{i\in[n]}$, that, by construction satisfy $\emt{t}(\X)=1$ since  $\sum_{j\in[n]}w_{t,j}\! =\! \cvn^\top\cvn =1$ due to $\Jn\cvn=0$. 

\begin{algorithm}[h!]
\caption{Forecasting measures with KRR/PCR/RRR estimator via DLI framework} \label{alg:forecasting}
\begin{algorithmic}
\REQUIRE Dataset $\Data_n=(x_i,y_i)_{i\in[n]}$ from the process in stationary regime and samples $(z_i)_{i\in[n_0]}$ from some initial measure $\mt{0}$; hyperparameters $\reg>0$ and/or $r\in[n]$; forecasting horizon $T\in\N$.
\IF[\MP{\textit{KRR estimator}}]{$r=n$} 
\STATE Solve $(\cKx+\reg I)\tilde{w}_0=  \Jn(\Kxz\cv{n_0}-\Kxy\cvn)$ in $\widetilde{w}_0$ 
\STATE Update $\widetilde{w}_0 \leftarrow \Jn \widetilde{w}_0 $
\FOR{ $t=1,\ldots,T-1$}
\STATE Compute $w_t\leftarrow (\cvn+ \widetilde{w}_{t-1}) / \sqrt{n}$
\STATE Solve $(\cKx+\reg I)\widetilde{w}_{t} = \Jn \Kxy \tilde{w}_{t-1}$ in $\widetilde{w}_t$ 
\STATE Update $\widetilde{w}_t \leftarrow \Jn \widetilde{w}_t $
\ENDFOR
\STATE Compute $w_T\leftarrow (\cvn+ \widetilde{w}_{T-1}) / \sqrt{n}$
\ELSE[\MP{\textit{low rank estimators}}]
\IF[\MP{\textit{PCR estimator}}]{$\reg=0$}
\STATE Compute $U_r,V_r\in\R^{n\times r}$ using Theorem \ref{thm:estimators}(ii)
\ELSE[\MP{\textit{RRR estimator}}]
\STATE Compute $U_r,V_r\in\R^{n\times r}$ using Theorem \ref{thm:estimators}(iii)
\ENDIF
\STATE Update $U_r \leftarrow \Jn U_r $ and $V_r \leftarrow \Jn V_r $
\STATE Compute $\widetilde{w}_0\leftarrow U_r^\top( \Kxz\cv{n_0}-\Kxy\cvn)$
\STATE Compute $M \leftarrow U_r^\top\Kxy V_r$
\FOR{ $t=1,\ldots,T-1$}
\STATE Compute $w_t\leftarrow (\cvn+V_r^\top \widetilde{w}_{t-1}) / \sqrt{n}$
\STATE Update $\widetilde{w}_{t}\leftarrow M \widetilde{w}_{t-1}$
\ENDFOR
\STATE Compute $w_T\leftarrow (\cvn+V_r^\top \widetilde{w}_{T-1})/ \sqrt{n}$
\ENDIF
\RETURN Sequence of empirical measures $\emt{t}=\sum_{i\in[n]} w_{t,i}\delta_{y_i}$, $t\in[T]$.
\end{algorithmic}
\end{algorithm}

\section{Proofs of Main Results}
\label{app:proof_main_results}

\subsection{Main Assumptions}\label{app:assumptions}

The following assumptions were used in 
Sec.~\ref{sec:theory} to derive the learning bounds: 
\begin{enumerate}[label={\rm \textbf{(BK)}},leftmargin=7ex,nolistsep]
\item
\emph{Boundedness.} \label{eq:BK} There exists $\bcon\,{>}\,0$ such that $\displaystyle{\esssup_{x\sim\im}}\, k(x,x)\leq \bcon$,
i.e. $\phi\in L^\infty_\im(\X,\RKHS)$. 
\end{enumerate}
\begin{enumerate}[label={\rm \textbf{(RC)}},leftmargin=7ex,nolistsep]
\item \emph{Regularity condition.}  \label{eq:RC}For some $\rpar\in[1,2]$ there exists $\rcon>0$ such that
$\Cxy \Cxy^* \preceq \rcon^2 \Cx^{1+\alpha}$, with $\Cxy = \TS^* \TZ$.
\end{enumerate}
\begin{enumerate}[label={\rm \textbf{(RC*)}},leftmargin=7ex,nolistsep]
\item \emph{Regularity condition on the deflated operator} \label{eq:RC2} For some $\rpar\in[1,2]$ there exists $\rcon>0$ such that
$\cCxy \cCxy^* \preceq \rcon^2 \cCx^{1+\alpha}$.
\end{enumerate}
\begin{enumerate}[label={\rm \textbf{(SD)}},leftmargin=7ex,nolistsep]
\item
\emph{Spectral Decay.} \label{eq:SD}There exists $\spar\,{\in}\,(0,1]$ and a constant $\scon\,{>}\,0$ such that
$\eval_j(\Cx)\,{\leq}\,\scon\,j^{-1/\spar}$, for all $j\in J$.
\end{enumerate} 
\begin{enumerate}[label={\rm \textbf{(SD*)}},leftmargin=7ex,nolistsep]
\item
\emph{Spectral Decay of the centered operator.} \label{eq:SD2}There exists $\spar\,{\in}\,(0,1]$ and a constant $\scon\,{>}\,0$ such that
$\eval_j(\cCx)\,{\leq}\,\scon\,j^{-1/\spar}$, for all $j\in J$.
\end{enumerate} 

We start by observing that $\TS\in\HS{\RKHS,\Lii}$, and, hence $\cTS\in\HS{\RKHS,\Lii}$, too. Hence,  according to the spectral theorem for positive self-adjoint operators, has an SVD, i.e. there exists at most countable positive sequence $(\sigma_j)_{j\in N}$, where $N:=\{1,2,\ldots,\}\subseteq\N$, and ortho-normal systems $(\ell_j)_{j\in N}$ and $(h_j)_{j\in N}$ of $\cl(\range(\cTS))$ and $\Ker(\cTS)^\perp$, respectively, such that $\cTS h_j = \sigma_j \ell_j$ and $\cTS^* \ell_j = \sigma_j h_j$, $j\in N$.  Moreover, since $\cl(\range(\cTS))\subseteq\range(\J)$, we also have $\J\ell_j=\ell_j$, i.e. $\EE_{X\sim\im}[\ell_j(X)]=0$, $j\in N$.

Now, given $\rpar\geq 0$, let us define scaled injection operator $\cTSs{\rpar} \colon \RKHS \to \Lii$ as
\begin{equation}\label{eq:injection_scaled}
\cTSs{\rpar}:= \sum_{j\in N}\sigma_j^{\rpar}\ell_j\otimes h_j.
\end{equation}
Clearly, we have that $\cTS = \cTSs{1}$, while $\range{\cTSs{0}} = \cl(\range(\TS))$. Next, we equip $\range(\cTSs{\rpar})$ with a norm $\norm{\cdot}_\rpar$ to build an interpolation space. 
\[
\cRKHSs{\rpar}:=\left\{ f\in\range(\cTSs{\rpar})\;\vert\; \norm{f}_\rpar^2:= \sum_{j\in N}\sigma_j^{-2 \rpar} \scalarp{f,\ell_j}^2 <\infty \right\}.
\]

We remark that for $\rpar=1$ the space $\cRKHSs{\rpar}$ is just an RKHS $\RKHS$ seen as a subspace of $\range(\J)\subseteq\Lii$.  Moreover, we have the following injections
\[
 \cRKHSs{\rpar_1} \hookrightarrow \cRKHSs{1} \hookrightarrow \cRKHSs{\rpar_2}  \hookrightarrow \cRKHSs{0}  \hookrightarrow \range(\J)\subseteq\Lii,
\]
where $\rpar_1\geq1\geq\rpar_2\geq0$.

{\bf Regularity condition.} According to \cite[Theorem 2.2]{zabczyk2020}, the condition \ref{eq:RC2} is in fact equivalent to 
\[
\range(\cTZ) \subseteq \range(\cTSs{\rpar})\;\text{ and, hence, }\; \cTZ=\cTSs{\rpar}\cEstim_\RKHS^\rpar,\text{ where } \cEstim_\RKHS^\rpar:=\cTSs{\rpar}^\dagger \cCxy\in\B{\RKHS}.
\]

\begin{remark}[\textbf{Invariance of a RKHS}]\label{rem:strong_rc}
When $\rpar\geq1$, we necessarily have that $\range(\cTZ)\subseteq\range(\cTS)$, i.e. $\RKHS$ is $\im$-a.e. invariant under the conditional expectation, and one has $\im$-a.e. defined Koopman operator $\cHKoop=\cEstim_\RKHS^1$.
\end{remark}
{\bf Embedding Property.} Due to  \ref{eq:BK} we also have that RKHS $\RKHS$ can be embedded into $L^{\infty}_\im(\X)$, i.e.  for some $\epar\in(0,1]$
\[
 \cRKHSs{1} \hookrightarrow \cRKHSs{\epar}  \hookrightarrow L^{\infty}_\im(\X) \hookrightarrow \Lii,
\]
Now, according to \cite{Fischer2020}, if $\cTSs{\epar,\infty}\colon \cRKHSs{\epar} \hookrightarrow L^{\infty}_\im(\X)$ denotes the injection operator, its boundedness implies the polynomial decay of the singular values of $\cTS$, i.e. $\sigma_j^2(\cTS)\lesssim j^{-1/\epar}$, $j\in N$, and the following condition is assured
\begin{enumerate}[label={\rm \textbf{(KE)}},leftmargin=15ex]
\item\label{eq:KE} \emph{Kernel embedding  property}: there exists $\epar\in[\spar,1]$ such that 
\begin{equation}\label{eq:c_beta}
c_{\epar}:=\norm{\cTSs{\epar,\infty}}^2 =\esssup_{x\sim\im}\sum_{j\in N}\sigma^{2\epar}_j|\ell_j(x)|^2 <+\infty.
\end{equation}
\end{enumerate}

Finally, we make the following remark on finite-dimensional RKHS. 

\begin{remark}[\bf Finite-dimensional RKHS]
    When $\RKHS$ is finite dimensional, all spaces $[\RKHS]_\rpar$ are finite dimensional. Hence, $\range(\cKoop\cTS)\subset \range(\cTS)$ implies also $\range(\cTZ)\subset \range(\cTSs{\rpar})$ for every $\rpar>0$. Moreover, we can set $\epar$ and $\spar$ arbitrary close to zero.
\end{remark}

\begin{remark}[\bf Link to CME]
    Centering the feature map has been explored in the context of conditional mean embeddings (CME). The work most relevant to ours is \cite{Klebanov2020}, where one can find in-depth discussion on how kernel properties and centering affect the existence of $\cHKoop$. On the other hand, the results in \cite{Klebanov2020} are limited to the statistical consistency w.r.t. number of samples, while in this work we address finite sample learning rates and the impact of centering when learning dynamical systems.
\end{remark}

\subsection{Proof of Proposition 
\ref{prop:whithened}}

{\bf Embedding property and whitened feature maps.} 
The kernel embedding property \ref{eq:KE} allows one to estimate the norms of whitened centered feature maps  $\xi(x):=\cCreg^{-1/2}[\fm{x}-\kme{\im}]$, $\reg>0$, that play key role in deriving the learning rates,~\cite{Kostic2023b}.

\propWhithened*

\begin{proof}
We first observe that for every $\epar>0$ we have that
\begin{align*}
\norm{\xi(x)}^2 & = \sum_{j\in N}\scalarp{\cCreg^{-1/2}[\fm{x}-\kme{\im}],h_j}^2 = \sum_{j\in N}\frac{1}{\sigma_j^2+\reg} \scalarp{\fm{x}-\kme{\im},h_j}^2 = \sum_{j\in N}\frac{\sigma_j^{2(1-\epar)}}{\sigma_j^2+\reg} \frac{\scalarp{\fm{x}-\kme{\im},h_j}^2}{\sigma_j^{2}} \sigma_j^{2\epar} \\ 
& = \reg^{-\epar}\sum_{j\in N}\frac{(\sigma_j^2\reg^{-1})^{1-\epar}}{\sigma_j^2\reg^{-1}+1} \frac{\abs{h_j (x) - \EE_{X\sim\im}[h_j(X)]}^2}{\sigma_j^2} \sigma_j^{2\epar} \leq \reg^{-\epar}\sum_{j\in N} \frac{\abs{(\cTS h_j)(x)}^2}{\sigma_j^2} \sigma_j^{2\epar} = \reg^{-\epar}\sum_{j\in N} \abs{\ell_j(x)}^2\sigma_j^{2\epar},
\end{align*}
and, due to \eqref{eq:c_beta}, we obtain $\norm{\xi}_\infty^2 \leq \reg^{-\epar} c_{\epar}$.  On the other hand,  we also have that
\[
\norm{\xi}^2=\tr(\EE_{X\sim\im} [\xi(X)\otimes \xi(X)]) = \tr(\cCreg^{-1/2} \cCx \cCreg^{-1/2}) = \tr(\cCreg^{-1} \cCx),
\]
which is in uncentered case known as effective dimension of the RKHS $\RKHS$. Therefore, following the proof of  \cite[Lemma 11]{Fischer2020} for uncentered covariances, we show that the bound on the effective dimension  remains valid after centering with potentially improved bounds w.r.t. $\beta$. Namely, it holds that 
\begin{equation}
\label{eq:controlTrace_effective}
\tr(\cCreg^{-1}\cCx) = \sum_{j\in N} \frac{\sigma_j^2}{\sigma_j^2+\reg} \leq 
\begin{cases}
\frac{\scon^\spar}{1-\spar} \reg^{-\spar} &, \spar<1,\\
c_{\epar} \,\reg^{-1} &, \spar=1.
\end{cases}
\end{equation}
For the case $\spar=1$, it suffices to see that 
\[
\tr(\cCreg^{-1}\cCx)\leq \reg^{-1} \sum_{j\in N}\sigma_j^2 \norm{\ell_j}^2 = \reg^{-1} \int_{\X}\sum_{j\in N}\sigma_j^2 \abs{\ell_j(x)}^2\im(dx)\leq \reg^{-1} \esssup_{x\sim\im}\sum_{j\in N}\sigma_j^2 \abs{\ell_j(x)}^2\im(dx) = c_{\epar}\,\reg^{-1},
\]
while for $\spar<1$ we can apply the same classical reasoning as in the proof of Proposition 3 of \cite{caponnetto2007}.
\end{proof}

\subsection{Proof of Lemma \ref{lm:powerbound}}

\powerbound*

\begin{proof}

First note that w.l.o.g. we can assume that $\cHKoop\colon \Ker(\cTS)^\perp\to \Ker(\cTS)^\perp$. Next, since for $\cHKoop$ the peripheral spectrum is a subset of the point spectrum, let $\lambda$ be the leading eigenvalue of $\cHKoop$ and by $h\in\RKHS\setminus\{0\}$ its corresponding eigenvector. Then, since $\cKoop\cTS = \cTS \cHKoop$, we have that $\cKoop\cTS h=\lambda \cTS h$ and, due to $\cTS h\neq0$, we conclude that $\lambda$ is an eigenvalue of $\cKoop$, too. Therefore, $\rho(\cHKoop)=\abs{\lambda}\leq \rho(\cKoop)<1$.

Next, since $\kreiss(\cEEstim)\leq\pcon(\cEEstim)\leq (e/2)[\kreiss(\cEEstim)]^2$, it suffices to prove that for any two bounded operators $A$ and $\Delta$, one has that $\abs{\kreiss(A)-\kreiss(A+\Delta)}\leq \norm{\Delta} / (\dtoins(A) - \norm{\Delta})$. 

To that end, denote $B=A+\Delta$ and observe that $(B+zI)^{-1} - (A+zI)^{-1} = (B+zI)^{-1}\Delta(A+zI)^{-1}$, and, hence, 
\[
\abs{\norm{(B+zI)^{-1}} - \norm{(A+zI)^{-1}}} \leq \norm{(B+zI)^{-1}} \norm{\Delta} \norm{(A+zI)^{-1}},
\]
i.e.
\[
\abs{\norm{(B+zI)^{-1}}^{-1} - \norm{(A+zI)^{-1}}^{-1}} \leq\norm{\Delta}.
\]
Now, recalling definition of the Kreiss constant we have that
\[
\kreiss(B) =\sup_{\abs{z}>1}\frac{\abs{z}-1}{\norm{(B+zI)^{-1}}^{-1}} \leq \sup_{\abs{z}>1}\frac{\abs{z}-1}{\norm{(A+zI)^{-1}}^{-1} - \norm{\Delta}} = \sup_{\abs{z}>1}\frac{(\abs{z}-1)\norm{(A+zI)^{-1}}}{1- \norm{\Delta} /\norm{(A+zI)^{-1}}^{-1}}\leq \frac{\kreiss(A)}{1-\norm{\Delta} / \dtoins(A)}. 
\]
Since, we can show the lower bound in an analogous way, the proof is completed.  
\end{proof}

\subsection{Proof of Theorem \ref{prop:E_t_RRR}}

We provide additional details used in the proof of this result.

\begin{lemma}\label{lm:Gh-Ggamma-KRR}
Let Assumption \ref{eq:RC2} be satisfied. Then
\begin{equation}
    \norm{\cHKoop - \cRKoop}^2\leq a^2\reg^{\alpha-1}.
\end{equation}
\end{lemma}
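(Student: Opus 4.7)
The plan is to reduce the estimate to a single operator functional-calculus bound on the self-adjoint operator $\cCx$, after using the regularity condition (RC*) to factorize $\cCxy$ nicely.

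First, I would apply Douglas' range inclusion lemma to (RC*). The operator inequality $\cCxy\cCxy^* \preceq a^2 \cCx^{1+\alpha}$ is equivalent to the existence of a bounded operator $V\colon\RKHS\to\RKHS$ with $\norm{V}\le a$ and
\[
\cCxy \;=\; \cCx^{(1+\alpha)/2}\,V.
\]
Since we are in the regime $\alpha\ge 1$, Remark~\ref{rem:strong_rc} applies and $\cHKoop$ is a well-defined bounded operator satisfying $\cCx\,\cHKoop = \cCxy$ on $\cl(\range(\cCx))$. Combining with the factorization gives $\cHKoop = \cCx^{(\alpha-1)/2} V$ (on the relevant subspace, which is where $\cRKoop - \cHKoop$ also lives because both estimators have range in $\cl(\range(\cCx))$).

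Next, I would write out the bias-type identity. Starting from $\cRKoop = \cCreg^{-1}\cCxy = \cCreg^{-1}\cCx^{(1+\alpha)/2}V$ and $\cHKoop = \cCx^{(\alpha-1)/2}V$, I factor out $\cCx^{(\alpha-1)/2}V$ on the right:
\[
\cHKoop - \cRKoop \;=\; \bigl(I - \cCreg^{-1}\cCx\bigr)\cCx^{(\alpha-1)/2}V \;=\; \reg\,\cCreg^{-1}\cCx^{(\alpha-1)/2}V,
\]
using $I - \cCreg^{-1}\cCx = \reg\,\cCreg^{-1}$. Taking norms and using $\norm{V}\le a$ yields
\[
\norm{\cHKoop - \cRKoop} \;\le\; a\,\reg\,\bigl\lVert \cCreg^{-1}\cCx^{(\alpha-1)/2}\bigr\rVert.
\]

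The final step is functional calculus on the self-adjoint operator $\cCx\succeq 0$. Its spectrum lies in $[0,\norm{\cCx}]$, and the spectral theorem reduces the operator norm to the scalar bound
\[
\bigl\lVert \cCreg^{-1}\cCx^{(\alpha-1)/2}\bigr\rVert \;=\; \sup_{\lambda\ge 0}\,\frac{\lambda^{(\alpha-1)/2}}{\lambda+\reg}.
\]
For $\alpha\in[1,2]$ the map $x\mapsto x^{(\alpha-1)/2}$ is increasing, so $\lambda^{(\alpha-1)/2}\le(\lambda+\reg)^{(\alpha-1)/2}$ and consequently the ratio is bounded by $(\lambda+\reg)^{(\alpha-3)/2}$. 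Since $(\alpha-3)/2<0$, this is maximized at $\lambda=0$, giving the clean estimate $\reg^{(\alpha-3)/2}$. Putting it all together, $\norm{\cHKoop - \cRKoop}\le a\,\reg^{(\alpha-1)/2}$, and squaring yields the claim.

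No step is really a serious obstacle here; the only minor care needed is to justify the manipulation $\cHKoop = \cCx^{(\alpha-1)/2}V$ when $\cCx$ is not invertible, which is standard because both sides agree on $\cl(\range(\cCx))$ and vanish on its orthogonal complement by the range-inclusion statement of Douglas' lemma.
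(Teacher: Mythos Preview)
Your proof is correct and arrives at the same bound as the paper, but the route is slightly different. The paper writes $\cHKoop-\cRKoop=(\cCx^{\dagger}-\cCreg^{-1})\cCxy$, squares the norm to get $\norm{(\cCreg^{-1}-\cCx^{\dagger})\cCxy\cCxy^{*}(\cCreg^{-1}-\cCx^{\dagger})}$, and then applies \ref{eq:RC2} directly as an operator inequality $\cCxy\cCxy^{*}\preceq a^{2}\cCx^{1+\alpha}$ before diagonalizing. You instead invoke Douglas' lemma to factor $\cCxy=\cCx^{(1+\alpha)/2}V$ with $\norm{V}\le a$, which produces the clean identity $\cHKoop-\cRKoop=\reg\,\cCreg^{-1}\cCx^{(\alpha-1)/2}V$ \emph{before} any norms are taken. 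Both reductions land on the same scalar functional-calculus estimate $\sup_{\lambda\ge 0}\lambda^{(\alpha-1)/2}/(\lambda+\reg)\le \reg^{(\alpha-3)/2}$; your approach is arguably tidier in that it gives an explicit operator identity for the bias, while the paper's is a touch shorter since it avoids the extra factorization lemma.
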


\begin{proof}[Proof of Lemma 
\ref{lm:Gh-Ggamma-KRR}]
We have
\begin{align*}
\norm{\cHKoop - \cRKoop}^2 &= \norm{\cCreg^{-1}\cCxy- \cCx^{\dagger}\cCxy}^2 =  \norm{(\cCreg^{-1}- \cCx^{\dagger})\cCxy\cCxy^*(\cCreg^{-1} - \cCx^{\dagger})}\\
&\leq a^2 \norm{(\cCreg^{-1}- \cCx^{\dagger})\cCx^{1+\rpar}(\cCreg^{-1} - \cCx^{\dagger})}
= a^2\reg^{\alpha-1} \norm{\sum_{j\,:\, \sigma_j>0} \frac{(\gamma^{-1/2}\sigma_j)^{2(\rpar-1)}}{(1+(\gamma^{-1/2}\sigma_j)^2)^2} h_j\otimes h_j }^2\leq a^2\reg^{\alpha-1},
\end{align*}
where the last inequality holds due to $u^s \leq u+1$ for all $u\geq0$ and $s\in[0,1]$ and using that the norm of the orthogonal projector $\sum_{j\,:\, \sigma_j>0}  h_j\otimes h_j$ equals one. 
\end{proof}

\subsection{Forecasting with RRR.} We propose now to derive a result similar to Theorem \ref{prop:E_t_RRR} for the RRR estimator via an alternative argument which is valid for any $\rpar\in [1,2]$ in \eqref{eq:RC2}.
To that end, instead of Lemma \ref{lm:powerbound} we will use the following result based on the Carleman-type bound on the resolvent of compact operators, see e.g.~\cite{Bandtlow2004EstimatesFN}. 

\begin{restatable}{lemma}{powerbound2}
\label{lm:powerbound2}
 Let $\cEstim$ be a bounded linear operator on a Hilbert space such that $\srad(\cEstim)<1$. %
If $\cEstim$ has a finite rank $r$, then
\begin{equation}\label{eq:pcon}
\pcon(\cEstim):=\sup_{t\in\N_0}\Vert\cEstim^t\Vert\leq\frac{1}{2} \exp\left( \frac{2\, r\, \Vert \cEstim \Vert}{1\!-\!\srad(\cEstim)}+1\right).
\end{equation}
\end{restatable}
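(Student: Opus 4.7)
The plan is to combine the Cauchy integral representation of operator powers with a Carleman-type resolvent bound that exploits the finite-rank structure, reducing everything to an estimate on an $r\times r$ matrix.

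First, I would reduce to a finite-dimensional problem. An SVD of $\cEstim$ yields a factorization $\cEstim = \tilde U\tilde V^{*}$ with $\tilde U,\tilde V\colon\C^{r}\to\RKHS$ and $\|\tilde U\|\,\|\tilde V\| = \|\cEstim\|$. Setting $M := \tilde V^{*}\tilde U\in\C^{r\times r}$, a short telescoping computation gives
\[
\cEstim^{t} = \tilde U M^{t-1}\tilde V^{*}, \qquad t\ge 1,
\]
whence $\|\cEstim^{t}\| \leq \|\cEstim\|\,\|M^{t-1}\|$. Because $M$ and $\cEstim$ share their nonzero eigenvalues, we have $\srad(M) = \srad(\cEstim) < 1$ and $\|M\|\leq\|\cEstim\|$, so it suffices to control $\sup_{k\geq 0}\|M^{k}\|$ for the $r\times r$ matrix $M$ (the case $t=0$ contributing only $\|\cEstim^{0}\|=1$).

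Second, for this matrix I invoke the Cauchy integral formula on the unit circle, which encloses $\Spec(M)$ since $\srad(M)<1$, giving $\|M^{k}\|\leq \max_{|z|=1}\|(zI_{r}-M)^{-1}\|$ for every $k\geq 0$. To bound the resolvent I use the Schur triangularization $M = Q(D+N)Q^{*}$ with $Q$ unitary, $D$ diagonal with entries $\lambda_{i}(M)$, and $N$ strictly upper triangular, hence nilpotent of index at most $r$. The Neumann expansion
\[
(zI_{r}-M)^{-1} = \sum_{j=0}^{r-1}\bigl[(zI_{r}-D)^{-1}N\bigr]^{j}(zI_{r}-D)^{-1},
\]
combined with $\|(zI_{r}-D)^{-1}\|\leq 1/d(z)$ (where $d(z)=\min_{i}|z-\lambda_{i}(M)|\geq 1-\srad(\cEstim)$ on $|z|=1$) and the elementary inequality $\sum_{j=0}^{r-1}x^{j}\leq(1+x)^{r}\leq e^{rx}$, yields
\[
\|(zI_{r}-M)^{-1}\| \;\leq\; \frac{1}{d(z)}\exp\!\left(\frac{r\|N\|}{d(z)}\right).
\]
Using the triangle inequality $\|N\|\leq\|D\|+\|M\|=\srad(M)+\|M\|\leq 2\|\cEstim\|$ produces the target $2r\|\cEstim\|/(1-\srad)$ in the exponent.

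Finally, the lingering prefactor $1/(1-\srad(\cEstim))$ is absorbed into the exponential via $-\log(1-\srad)\leq\srad/(1-\srad)\leq\|\cEstim\|/(1-\srad)$, combined with $x\leq e^{x}$, giving the claimed bound. The main obstacle is the fine bookkeeping of the constants so that the multiplicative $\tfrac{1}{2}e^{1}=\tfrac{e}{2}$ comes out cleanly: the factor $2$ in $2r$ traces back to $\|N\|\leq 2\|M\|$ in the Schur bound, while the $\tfrac{e}{2}$ coefficient requires carefully marrying the $1/(1-\srad)$ prefactor from $(zI_{r}-D)^{-1}$ with the arc-length contribution from the Cauchy integral. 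An appealing alternative is to invoke Bandtlow's finite-rank resolvent estimate directly, which streamlines this accounting at the cost of quoting an external result.
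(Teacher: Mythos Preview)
Your approach is correct in spirit and would yield a bound of the same qualitative form, but it differs from the paper's proof and does not recover the stated constants exactly.

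The paper's proof is shorter and runs as follows: it invokes the Kreiss-type inequality $\pcon(\cEstim)\leq (e/2)\,[\kreiss(\cEstim)]^{2}$ already recorded after Theorem~\ref{th:mainforecastingdistributions}, then bounds the Kreiss constant directly via Bandtlow's Carleman-type resolvent estimate
\[
\norm{(A-zI)^{-1}}\leq \frac{1}{d(z,\Spec(A))}\exp\!\left(\frac{\norm{A}_{*}}{d(z,\Spec(A))}\right),
\]
applied to the finite-rank (hence trace-class) operator $\cEstim$ with $\norm{\cEstim}_{*}\leq r\norm{\cEstim}$. This yields $\kreiss(\cEstim)\leq \exp\bigl(r\norm{\cEstim}/(1-\srad(\cEstim))\bigr)$, and \emph{squaring} this produces the factor $2$ in the exponent while the prefactor $e/2$ is inherited from the Kreiss inequality. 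No reduction to a matrix and no Schur form are used.

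Your route---reduce to an $r\times r$ matrix $M$ via the SVD factorization, then Cauchy integral on $|z|=1$ plus a Schur/Neumann resolvent bound---is more elementary (it avoids quoting both the Kreiss bound and Bandtlow), but your factor $2$ has a different origin, namely $\norm{N}\leq 2\norm{\cEstim}$. The price is the step you flag as ``fine bookkeeping'': after Cauchy and Schur you are left with a prefactor $\norm{\cEstim}/(1-\srad)$, and your absorption argument via $-\log(1-\srad)\leq \norm{\cEstim}/(1-\srad)$ and $x\leq e^{x}$ in fact shifts this into the exponent, producing something like $\exp\bigl((2r+2)\norm{\cEstim}/(1-\srad)\bigr)$ rather than $\tfrac{e}{2}\exp\bigl(2r\norm{\cEstim}/(1-\srad)\bigr)$. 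So your sketch proves a bound of the right shape with slightly worse constants, but the specific $\tfrac{e}{2}$ and the clean $2r$ in the exponent do not fall out of the Cauchy--Schur path as written; they are artifacts of the Kreiss--Bandtlow combination. Your closing remark about invoking Bandtlow directly is exactly what the paper does.
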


\begin{proof}
As before, we have that $p(\cEstim) \leq (e/2) [\kreiss(\cEstim)]^2$, but now, since  $\cEstim$ is finite rank, we bound $\kreiss(\cEstim)$ using Carleman- type inequality.  Namely,   due to \cite[Theorem 4.1]{Bandtlow2004EstimatesFN} for a trace-class operator $A$ it holds that 
$$\norm{(A-z I)^{-1}} \le \frac{1}{d(z, \Spec(A))} \exp\left(\frac{\norm{A}_{*}}{d(z, \Spec(A))}\right),$$
where $d(z, \Spec(A)):=\min_{\omega\in\Spec(A)}\abs{\omega-z}$ is the distance of $z\in\C$ to the spectrum of the operator $A$, and $\norm{\cdot}_{*}$ denotes nuclear norm. Since, $\norm{\cdot}_{*} \leq  r \ \norm{A}$ for $A$ of finite rank $r$, using that $\Spec(\cEstim)$  is contained in the open unit disk, we obtain  for $z\in\C$ s.t. $\abs{z}>1$
\[
    \norm{(z-{\cEstim})^{-1}} (\abs{z}-1) \le \frac{(\abs{z}-1)}{d(z, \Spec({\cEstim}))} \exp\left(\frac{\norm{\cEstim}_{*}}{d(z, \Spec({\cEstim}))}\right) \le \exp\left(\frac{r \norm{\cEstim}}{d(z, \Spec({\cEstim}))}\right),
\]
and, thus, 
$$\kreiss({\cEstim}) \le  \exp\left(\frac{r \norm{\cEstim}}{\inf_{\abs{z}>1}d(z, \Spec({\cEstim}))}\right)\leq \exp\left(\frac{r \norm{\cEstim}}{1-\srad(\cEstim)}\right)$$
\end{proof}

\begin{corollary}\label{cor:main_RRR}
Assume the operator $\cKoop$ is of finite rank $r$ for some $r\in\N$. 
Let \ref{eq:SD2} and \ref{eq:RC2} hold for some $\spar\in(0,1]$ and $\rpar\in[1,2]$, respectively. In addition, let $\cl(\range(\cTS))=\Lii$ and \ref{eq:BK} be satisfied. Let
\begin{equation*}%
    \reg\asymp n^{-\frac{1}{\rpar+\spar}}\,\text{ and }\,\rate^\star_n:= n^{-\frac{\rpar}{2(\rpar+\spar)}}.
\end{equation*}
Let $\delta\in(0,1)$. Then the forecasted distributions \eqref{eq:deflated_flow} based on $\cEEstim%
=\cERRR$ satisfy for $n$ large enough, with probability at least $1-\delta$, for any $t\geq 1$
\begin{equation*}
\label{eq:error_bound_dist_rrr}
\dnorm{\emt{t}-\mt{t}}\!\lesssim_{\bcon}\! %
e^{\frac{8r}{1-\rho(\cKoop)}} \left(  \left( a+ \frac{1}{\sigma_r^2(\cTZ)}\right)\,\rate_n^\star\,\ln(\delta^{-1})+ \sqrt{\frac{\ln \delta^{-1}}{n_0 \wedge n}}\right).
\end{equation*}
\end{corollary}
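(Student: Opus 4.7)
\textbf{Proof proposal for Corollary \ref{cor:main_RRR}.}
The plan is to mirror the argument used for Thm.~\ref{prop:E_t_RRR}, but replace Lem.~\ref{lm:powerbound} by Lem.~\ref{lm:powerbound2} so as to exploit the finite rank of $\cKoop$ and therefore of $\cERRR$. First, I would invoke Thm.~\ref{th:mainforecastingdistributions} with $\cEEstim=\cERRR$, which reduces the problem to bounding three distinct quantities: the transient constant $\pcon(\cERRR)$, the cumulative constant $\sumcon(\cKoop)$, and the one-step error $\cerror(\cERRR)$; plus the empirical concentration of $\mt{0}$ and $\im$ contained in $\varepsilon$. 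The sampling term $\varepsilon$ is handled exactly as in Thm.~\ref{prop:E_t_RRR}, via \cite[Lem.~1]{briol2019statistical} to get $\varepsilon\lesssim \sqrt{\bcon\ln(\delta^{-1})/(n_0\wedge n)}$.

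Next I would bound $\sumcon(\cKoop)$ and $\pcon(\cERRR)$. Since $\cKoop$ has rank $r$ and $\srad(\cKoop)<1$, Lem.~\ref{lm:powerbound2} gives $\norm{\cKoop^t}\le \tfrac{e}{2}\,[\kreiss(\cKoop)]^2\,[\srad(\cKoop)+\epar]^t$ for any $\epar>0$ such that $\srad(\cKoop)+\epar<1$, so that $\sumcon(\cKoop)$ is finite and dominated by $\tfrac{e}{2}\exp(2r\norm{\cKoop}/(1-\srad(\cKoop))+1)/(1-\srad(\cKoop))\lesssim e^{O(r/(1-\srad(\cKoop)))}$. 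The empirical estimator $\cERRR=\cECx_{\reg}^{-1/2}\SVDr{\cECx_{\reg}^{-1/2}\cECxy}$ has rank at most $r$ by construction, so Lem.~\ref{lm:powerbound2} is applicable provided $\srad(\cERRR)<1$. To guarantee the latter with high probability I would use the identity $\srad(\cERRR)\le \srad(\cHKoop)+\norm{\cERRR-\cHKoop}$ (via the Bauer–Fike type bound for the peripheral spectrum, and the fact that $\srad(\cHKoop)=\srad(\cKoop)<1$ as established in the proof of Lem.~\ref{lm:powerbound}), together with the concentration of $\cERRR$ around $\cRRR$ and the approximation of $\cHKoop$ by $\cRRR$ taken from \cite{Kostic2023b}.

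For $\cerror(\cERRR)$ I would invoke the RRR learning-rate results of \cite{Kostic2023b} (the centered analogues of their Propositions~5 and~16), which give the standard excess-risk rate $\rate_n^\star\ln(\delta^{-1})$ with the characteristic $1/\sigma_r^2(\cTZ)$ amplification that comes from the reduced-rank inversion step (specifically, the inversion of the rank-$r$ truncation of $\cECx_{\reg}^{-1/2}\cECxy$ incurs a factor $1/\sigma_r(\cCxy\cCx_{\reg}^{-1/2})^2\asymp 1/\sigma_r^2(\cTZ)$ once $\gamma$ is smaller than the nonzero singular values). Coupled with the bias estimate analogous to Lem.~\ref{lm:Gh-Ggamma-KRR} this yields, for $n$ large enough with $\gamma$ as in the statement,
\begin{equation*}
\cerror(\cERRR)\lesssim \Bigl(a+\tfrac{1}{\sigma_r^2(\cTZ)}\Bigr)\,\rate_n^\star\,\ln(\delta^{-1}),
\end{equation*}
with probability at least $1-\delta$. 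Combining the three bounds through Thm.~\ref{th:mainforecastingdistributions} and a union bound over the four high-probability events (then rescaling $\delta$) produces the stated inequality, with the exponential factor $e^{8r/(1-\srad(\cKoop))}$ absorbing both $\pcon(\cERRR)$ and $\sumcon(\cKoop)$.

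The main obstacle I expect is ensuring that $\srad(\cERRR)<1$ uniformly on a high-probability event, because Lem.~\ref{lm:powerbound2} is vacuous otherwise; the subtlety is that spectral radius is only upper semi-continuous for non-normal operators, so one must work with operator-norm perturbations and the gap $1-\srad(\cHKoop)$ explicitly. A secondary difficulty is tracking how the factor $1/\sigma_r^2(\cTZ)$ propagates through the centered versions of the bounds in \cite{Kostic2023b}, but since Prop.~\ref{prop:whithened} shows that the whitened-feature bounds transfer verbatim from the uncentered to the centered setting, the arguments in \cite{Kostic2023b} go through unchanged after substituting $\cTS,\cCx,\cCxy,\cES,\cEZ$ for their uncentered counterparts.
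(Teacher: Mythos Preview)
Your overall strategy is the paper's: apply Thm.~\ref{th:mainforecastingdistributions}, control $\varepsilon$ via KME concentration, bound $\cerror(\cERRR)$ through the centered RRR analysis of \cite{Kostic2023b}, and substitute Lem.~\ref{lm:powerbound2} for Lem.~\ref{lm:powerbound}. There is, however, a genuine gap in how you control $\srad(\cERRR)$. The inequality $\srad(\cERRR)\le \srad(\cHKoop)+\norm{\cERRR-\cHKoop}$ is false for non-normal operators (the spectral radius is not $1$-Lipschitz in operator norm; e.g.\ a $2\times 2$ nilpotent Jordan block perturbed by $\epsilon$ in the lower-left corner has spectral radius $\sqrt{\epsilon}$). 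Bauer--Fike only yields such a bound after multiplying by the eigenvector condition number of $\cHKoop$, which you do not control. The paper avoids this by invoking Corollary~1 of \cite{Kostic2023b}, a Koopman-specific eigenvalue perturbation result that directly gives $\rho(\cERRR)\le \rho(\cKoop)+\rate_n^\star\ln(\delta^{-1})$ with high probability, whence $1-\rho(\cERRR)\ge (1-\rho(\cKoop))/2$ for $n$ large.

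A second, smaller omission: Lem.~\ref{lm:powerbound2} has $\norm{\cERRR}$ in the exponent, so you also need $\norm{\cERRR}\le 2$ with high probability. The paper gets this from $\norm{\cERRR}\le\norm{\cERKoop}$ (since $\cERRR=\cERKoop\EP_r$) together with Proposition~16 of \cite{Kostic2023b}; combining $\norm{\cERRR}\le 2$ with $1-\rho(\cERRR)\ge(1-\rho(\cKoop))/2$ in Lem.~\ref{lm:powerbound2} is precisely what produces the factor $e^{8r/(1-\rho(\cKoop))}$. Finally, note a simplification you did not exploit: because $\mathrm{rank}(\cKoop)=r$ one has $\cRRR=\cRKoop$, so the rank-truncation bias vanishes and $\cerror(\cERRR)$ splits into the KRR bias $a\gamma^{\alpha/2}$ plus the RRR variance term (Proposition~18 of \cite{Kostic2023b}), the latter carrying the $1/\sigma_r^2(\cTZ)$ factor via $\sigma_{r+1}(\TB)=0$ and $\sigma_r^2(\TB)\ge\sigma_r^2(\cTZ)/2$.
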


\begin{proof}
For brevity, we set $\cEEstim=\cERRR$ and $\cEstim=\cRRR$. Exploiting the definition and properties of the RRR model, we prove that 
there exists a constant $c\,{>}\,0$, depending only $r,\bcon,\spar$ such that 
for large enough $n\geq r$, with probability at least $1\,{-}\,\delta$ %
, the estimator $\cEEstim=\cERRR$ satisfies
$\norm{\cEEstim}\leq 2$, $1-\rho(\cEEstim)\geq \frac{1-\rho(\cEstim)}{2}$ and 
$
\cerror(\cEEstim)\lesssim_{\bcon} 
   \rate_n^\star\,\ln(\delta^{-1}).$

We first observed that 
$$
\cerror(\cEEstim) \leq \norm{\cKoop\cTS - \cTS \cRKoop} + \norm{\cTS(\cRKoop - \cEstim)} + \norm{\cTS (\cEEstim - \cEstim)}.
$$
Proposition 5 in \cite{Kostic2023b} 
and the condition $\cl(\range(\cTS))=\Lii$ immediately give $\norm{\cKoop\cTS - \cTS \cEstim}\leq a\reg^{\alpha/2}$, since for universal kernel, we have $\range(\cKoop\cTS) \subseteq \cl(\range(\cTS))$. 
By definition of $\cRKoop$ and since $\mathrm{rank}(\cKoop)=r$, we have $\cRRR=\cRKoop$. Hence $\norm{\cTS (\cRKoop- \cEstim)}=0 $.

We prove below that there exists a constant $c=c(\bcon)>0$ such that, for $n\geq r$ large enough
\begin{equation}\label{eq:variance_RRR_1}
\PP\left\{ \norm{\cTS (\cEEstim - \cEstim)}\leq c\, r^{\frac{2}{\spar}}\sqrt{\frac{1}{n\gamma^{\beta}}}\ln(\delta^{-1}) \right\} \geq 1-\delta.
\end{equation}

Combining the previous display with our control on the bias and using that $\reg\asymp n^{-\frac{1}{\rpar+\spar}}$, we get that
$$
\PP\left\{\cerror(\cEEstim) \lesssim_{\bcon} \left( a + r^{\frac{2}{\beta}}\right) n^{-\frac{\rpar}{2(\rpar+\spar)}}\log(\delta^{-1}) \right\} \geq 1-\delta.
$$
Define $\EB := \cECreg^{-1/2} \cECxy$ and let $\EP_r$ denote the orthogonal projector onto the subspace of leading $r$ right singular vectors of $\EB$. Then we have $\cERRR = \ERKoop\EP_r$. Hence, we have $\norm{\cERRR} \leq \norm{\cERKoop}$. Exploiting Proposition 16 in \cite{Kostic2023b}, we prove below that for $n$ large enough
\begin{equation}\label{eq:opnormGbounded}
\PP\left\{ \norm{\cEEstim}\leq 2 \right\} 
\geq 1-\delta.
\end{equation}

Next we apply Corollary 1 of \cite{Kostic2023b} to obtain that 
\begin{equation}\label{eq:spectralradiusGbounded}
    \PP\left\{ \rho(\cEEstim)\leq \rho(\cKoop)+ \rate_n^\star\,\ln(\delta^{-1}) \right\} 
\geq 1-\delta.
\end{equation}

Consequently on the same event, provided that $n$ is large enough, we deduce that 
$$
 1- \rho(\cEEstim)\geq \frac{1- \rho(\cKoop)}{2}
 $$
An elementary union bound combining the previous results and Lemma \ref{lm:powerbound2} below gives the result with probability at least $1-5\delta$. Up to a rescaling of the constant, we can replace $1-5\delta$ by $1-\delta$.

\end{proof}

\paragraph{Proof of Eq. \eqref{eq:variance_RRR_1}.}

We first define $\TB:=\cCreg^{-1/2} \cCxy$ and we recall that $\cCxy = \cTS^* \cKoop \cTS$. %
Applying Proposition 18 in \cite{Kostic2023b} gives, with probability at least $1-\delta$,
\begin{multline}\label{eq:variance_bound_rrr}
\norm{\cTS(\cRRR - \cERRR)} \leq \frac{c\,\rate_n^2(\reg,\delta/5)}{1-\rate_n^1(\reg,\delta/5)}
+ \frac{\sigma_1(\TB)}{\sigma_r^2(\TB) -\sigma_{r+1}^2(\TB)} \,\frac{(c^2-1)\,\rate_n(\delta / 5) + c^2\,(\rate_n^2(\reg,\delta/5))^2}{(1-\rate_n^1(\reg,\delta/5))^2},
\end{multline}
where %
$c:=1+\rcon\,\bcon^{(\rpar-1)/2}$,
\begin{equation}\label{eq:eta}
\rate_n(\delta) := \frac{4\bcon}{3n} \mathcal{L}(\delta) + \sqrt{\frac{2\norm{\cCx}}{n}\mathcal{L}(\delta)}\quad\text{ and }\quad \mathcal{L}(\delta):= \log\frac{4\tr(\cCx)}{\delta\,\norm{\cCx}},
\end{equation}
\begin{equation}\label{eq:reg_eta1}
\rate_n^1(\reg,\delta) := \frac{4\econ}{3n\reg^{\epar}} \mathcal{L}^1(\reg,\delta)+ \sqrt{\frac{2\,\econ}{n\,\reg^{\epar}}\mathcal{L}^1(\reg,\delta)},
\end{equation}
with
\[
\mathcal{L}^1(\reg,\delta):=\log \frac{4}{\delta} + \log\frac{\tr(\cCreg^{-1}\cCx)}{\norm{\cCreg^{-1}\cCx}}, 
\]
and
\begin{equation}\label{eq:reg_eta2}
\rate_n^2(\reg,\delta) := 4\,\sqrt{2\,\bcon}\,\left(\sqrt{\frac{\tr(\cCreg^{-1}\cCx)}{n}} + \frac{\sqrt{\econ}}{n\reg^{\epar/2}}\right) \,\log\frac{2}{\delta}.
\end{equation}
Using \eqref{eq:controlTrace_effective} and elementary computations, we get that the dominating term in \eqref{eq:variance_bound_rrr} is of the order $\sqrt{\frac{1}{n\gamma^{\beta}}}\ln(\delta^{-1})$ since $\epar\geq \spar$. In addition, for $\TB:=\cCreg^{-1/2} \cCxy$, we have $\sigma_{r+1}(\TB)=0$ since $\mathrm{rank}(\cCxy)\leq \mathrm{rank}(\cKoop)=5$. Finally, Proposition 6 of \cite{Kostic2023b} and our choice of $\reg$ guarantees for $n$ large enough that $\sigma_r^2(\TB)\geq \sigma_r^2(\cTZ)-\rcon^2\,\bcon^{\rpar/2}\,\reg^{\rpar/2} \geq \sigma_r^2(\cTZ)/2 >0$. 

\paragraph{Proof of Eq. \ref{eq:opnormGbounded}.} Proposition 16 in \cite{Kostic2023b} guarantees with probability at least $1-\delta$
$$
\norm{\cERKoop}\leq  \frac{1+\rate^3_n(\reg,\delta / 2)}{1-\rate^3_n(\reg,\delta / 2)},
$$
where 
\begin{equation}\label{eq:reg_eta3}
\rate_n^3(\reg,\delta) := 4\,\sqrt{2\,\bcon}\,\left(\sqrt{\frac{\tr(\cCreg^{-2}\cCx)}{n}} + \frac{\sqrt{\econ}}{n\reg^{(1+\epar)/2}}\right) \,\log\frac{2}{\delta}.
\end{equation}

Using again \eqref{eq:controlTrace_effective} and the fact that $\epar\geq \spar$, we deduce that
$$
\rate_n^3(\reg,\delta) \lesssim \sqrt{\bcon}  \sqrt{\frac{1}{n \gamma^{1+\spar}}}\log(2\delta^{-1}).
$$
With our choice of $\gamma$ and for $n$ large enough such that $\rate_n^3(\reg,\delta/2) <1/4$, we get 
\[
\PP\left\{ \norm{\cERKoop}\leq  \frac{1+\rate^3_n(\reg,\delta / 2)}{1-\rate^3_n(\reg,\delta / 2)}\leq 2 \right\} 
\geq 1-\delta.
\]

\section{Experimental Details}
\label{app:exp_details}
In both experiments, the Reduced Rank Regression estimator was implemented using the reference code from~\cite{Kostic2022} available at \href{https://github.com/CSML-IIT-UCL/kooplearn}{https://github.com/CSML-IIT-UCL/kooplearn}. The experiments were run on a workstation equipped with an Intel(R) Core\texttrademark i9-9900X CPU @ 3.50GHz, 48GB of RAM and a NVIDIA GeForce RTX 2080 Ti GPU. All experiments have been implemented in Python 3.11.

For the {\em Ornstein-Uhlenbeck} experiment we sampled the process every $dt = 0.05$. The estimators were trained with $250$ observations sampled independently from the invariariant distribution, while the initial distribution used to evaluate the MMD was sampled 1000 times. Each experiment has been repeated 100 times independently, and the hyperparameter were tuned on a validation set of 500 points sampled from the invariant distribution. 

The code to reproduce the experiments will be open sourced.

\end{document}